\theoremstyle{plain}
\newtheorem{theorem}{Theorem}[section]
\newtheorem{corollary}[theorem]{Corollary}
\theoremstyle{definition}
\theoremstyle{remark}
\icmltitlerunning{Algorithm of Thoughts}
\begin{document}

\twocolumn[
\icmltitle{Algorithm of Thoughts:\\ Enhancing Exploration of Ideas in Large Language Models}



\icmlsetsymbol{equal}{*}

\begin{icmlauthorlist}
\icmlauthor{Bilgehan Sel}{sch}
\icmlauthor{Ahmad Al-Tawaha}{sch}
\icmlauthor{Vanshaj Khattar}{sch}
\icmlauthor{Ruoxi Jia}{sch}
\icmlauthor{Ming Jin}{sch}
\end{icmlauthorlist}

\icmlaffiliation{sch}{Department of Electrical and Computer Engineering, Virginia Tech, Blacksburg, USA}
\icmlcorrespondingauthor{Bilgehan Sel}{bsel@vt.edu}

\icmlkeywords{Decision-making, Large Language Models, Language Models, Machine Learning, Algorithmic Thinking}

\vskip 0.3in
]



\printAffiliationsAndNotice{}  

\begin{abstract}
Current literature, aiming to surpass the ``Chain-of-Thought'' approach, often resorts to external modi operandi involving halting, modifying, and then resuming the generation process to boost Large Language Models' (LLMs) reasoning capacities. Due to their \emph{myopic perspective}, they escalate the number of query requests, leading to increased costs, memory, and computational overheads. Addressing this, we propose the \textit{Algorithm of Thoughts}---a novel strategy that propels LLMs through algorithmic reasoning pathways. By employing algorithmic examples fully in-context, this overarching view of the whole process exploits the innate recurrence dynamics of LLMs, expanding their idea exploration with merely one or a few queries. Our technique outperforms earlier single-query methods and even more recent multi-query strategies that employ an extensive tree search algorithms while using significantly fewer tokens. Intriguingly, our results suggest that instructing an LLM using an algorithm can lead to performance surpassing that of the algorithm itself, hinting at LLM's inherent ability to weave its intuition into optimized searches. We probe into the underpinnings of our method's efficacy and its nuances in application. The code and related content can be found in: \href{https://algorithm-of-thoughts.github.io/}{algorithm-of-thoughts.github.io}.
\end{abstract}

\section{Introduction}

\begin{figure*}[t]
\centering
\includegraphics[width=0.8\textwidth]{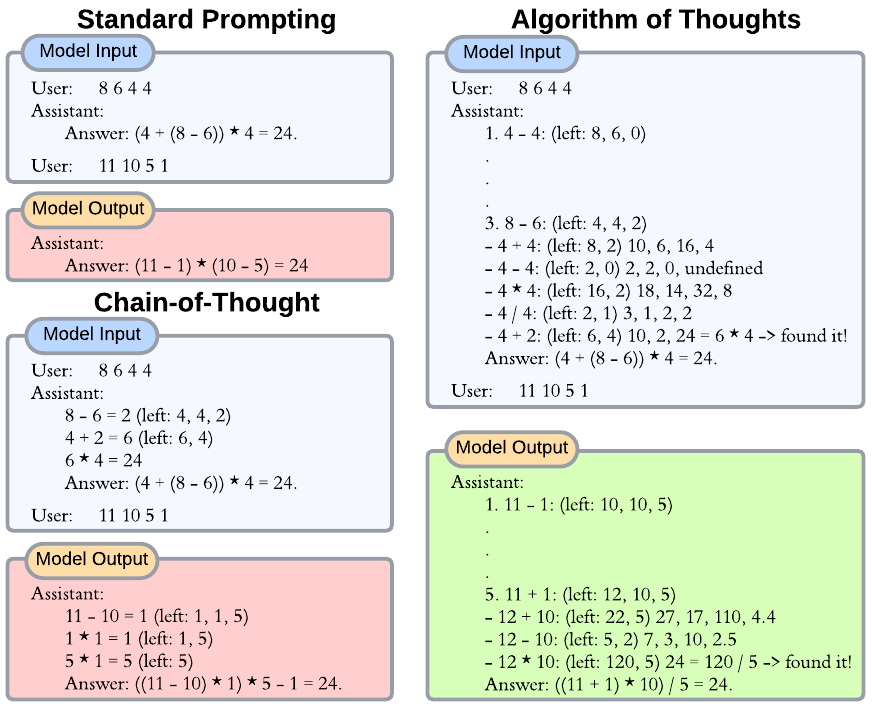}
\caption{Comparison between standard prompting, CoT, and AoT in the game of 24. While standard prompting aims for a direct answer, CoT sketches out the successive steps to the final solution. AoT's in-context example, distinct from CoT, integrates the search process, highlighted by markers `1',..., `3' as ``first operations'' guiding subtree exploration for the problem set `8 6 4 4'. For clarity, only a single in-context example is displayed, with a focus on the third subtree exploration. AoT produces prospective search steps (e.g., the subtree exploration `5. $11+1$') and evaluates potential subsequent steps to either progress towards a solution or retrace to another viable subtree.}
\label{fig: cot_vs_aot}
\end{figure*}

Recent developments in large language models \citep[][\textit{inter alia}]{chowdhery2022palm, thoppilan2022lamda, liu2023summary} have spotlighted their efficacy in general problem solving \cite{huang2022towards, suzgunChallengingBIGBenchTasks2022}, code generation \cite{chen2021evaluating, austin2021program}, and instruction following \cite{ouyang2022training, bai2022constitutional}. While early models relied on direct answer strategies \cite{brown2020language}, contemporary research has shifted towards linear reasoning paths \cite{wei2022chain, kojima2022large, zhangAutomaticChainThought2022} by breaking problems into sub-tasks for solution discovery, or harnesses external mechanisms to alter token generation by changing the context 
\cite{zhouLeasttoMostPromptingEnables2022, drozdovCompositionalSemanticParsing2022, yaoTreeThoughtsDeliberate2023}.

Analogous to human cognition \cite{sloman1996empirical, kahneman2011thinking}, early LLM strategies seemed to emulate the instantaneous \textit{System 1}, characterized by its impulsive decision-making. In contrast, more recent methodologies like chain-of-thought (CoT) \cite{wei2022chain} and least-to-most prompting (L2M) \cite{zhouLeasttoMostPromptingEnables2022, drozdovCompositionalSemanticParsing2022} reflect the analytical nature of \textit{System 2}. Notably, integrating intermediary reasoning steps has yielded improvements in arithmetic reasoning tasks \cite{srivastava2022beyond, liang2022holistic}.

However, as tasks shift towards deeper planning and extensive thought exploration, these methods appear restrictive. Although CoT integrated with Self-Consistency (CoT-SC) \cite{wangSelfConsistencyImprovesChain2022} enlists multiple LLM outputs for a consensus, the lack of meticulous evaluation can result in model misdirection. The ``Tree of Thoughts'' \cite{yaoTreeThoughtsDeliberate2023, long2023large} emerges as a notable solution. While one LLM is dedicated to idea generation, another steps in to assess the merit of these ideas, following a halting-assessment-resuming cycle. This iterative process, based on a tree search, has shown marked effectiveness, especially in tasks with a breadth of continuations. We see this progression as akin to humans employing tools to circumvent working memory limitations, serving as an external augmentation for LLMs \cite{mialon2023augmented, sel2023learning, gu2024balance}.

\begin{figure*}[t]
\centering
\includegraphics[width=0.9\textwidth]{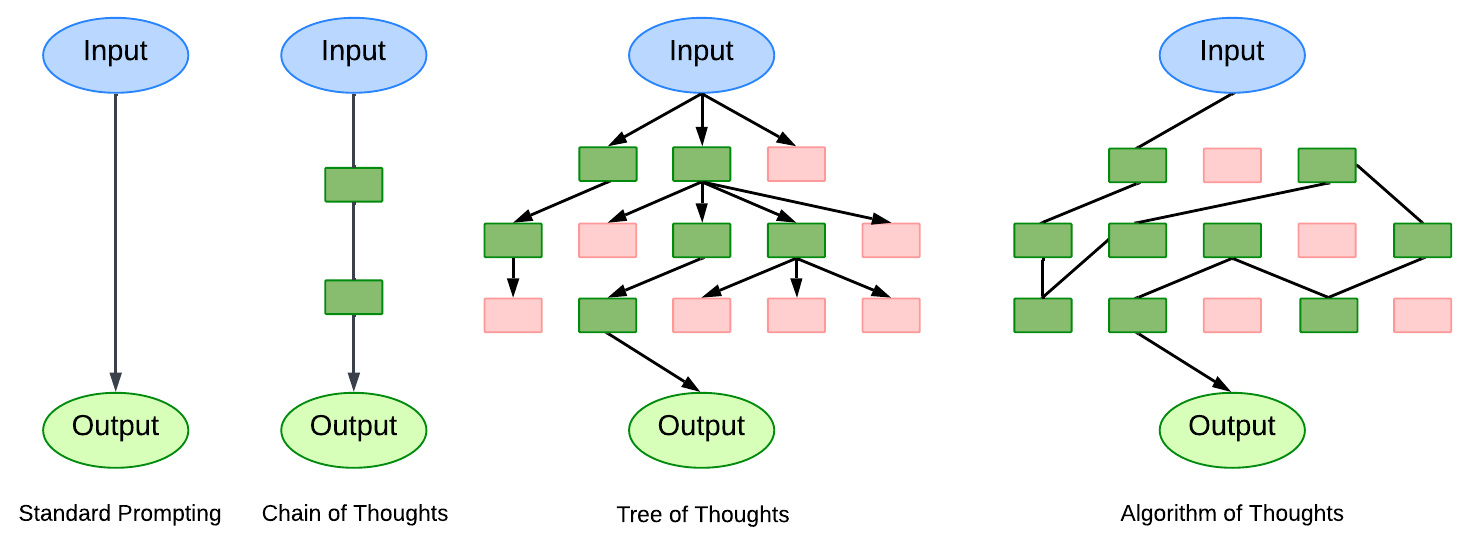}
\caption{Illustration outlining various strategies for tackling reasoning problems with LLMs. Each box signifies a distinct thought, functioning as a unified string of words that forms an incremental pathway to reasoning. Green boxes indicate ideas deemed promising by the LLM, while red boxes represent less promising concepts.}
\label{fig:diff}
\end{figure*}

On the flip side, this enhanced LLM approach is not without pitfalls. A prominent downside is the substantial surge in the number of queries and computational demands. Each query to online LLM APIs such as GPT-4---a focal point of our study---incurs a monetary expense \cite{chen2023frugalgpt} but also contributes to latency, a significant limitation especially critical in real-time applications. Cumulative delays from these queries can compromise solution efficiency. Infrastructure-wise, continuous interactions can stress systems, leading to potential bandwidth constraints and reduced model availability \cite{aminabadi2022deepspeed}. Moreover, the environmental implications cannot be ignored; incessant querying escalates the energy consumption of already power-hungry data centers, exacerbating the carbon footprint \cite{wu2022sustainable,dhar2020carbon,khattar2023winning}. 

With this in mind, our goal was to \emph{dramatically reduce the query counts employed by contemporary multi-query reasoning methods while maintaining performance for tasks necessitating adept use of world knowledge}, thereby steering a more responsible and proficient use of AI resources. Intriguingly, our aim has actually resulted in \emph{surpassing} the performance of such techniques while \emph{requiring significantly fewer tokens for prompting and generation.}

Reflecting on the evolution of LLMs from System 1 to System 2, an essential ingredient comes to light: algorithms \cite{sel2021glsdc, al2023decision, jin2023solution, gu2024safe}.  Characterized by its methodical nature, the algorithmic perspective offers a path to keenly explore problem spaces, enact strategies, and formulate solutions \cite{al2021singular, helie2022psychology, banerjee2022qualitative, sel2022sos, khattar2022cmdp}. While much of the prevailing literature treats algorithms as \emph{external} to LLMs \cite{lincausalprompt}, given LLMs' inherent generative recurrence, can we channel this iterative logic to \emph{internalize} an algorithm?

Drawing upon both the intricate nuances of human reasoning and the disciplined precision of algorithmic methodologies, our work aims to \emph{fuse these two elements to enhance reasoning capabilities} within LLMs. Existing research underscores that humans, when navigating complex problems, instinctively draw upon past efforts, ensuring a comprehensive contemplation rather than a narrow focus \cite{monsell2003task,holyoak2005cambridge,baddeley2003working}. LLMs, with their generative span bounded only by token limits, appear poised to break through the barriers of human working memory. Spurred by this observation, we investigated if LLMs could mirror a similar layered exploration of ideas, referencing prior intermediate steps to sieve out infeasible options, \emph{all within their iterative generation cycle}.  And while humans excel with their intuitive insight, algorithms stand out with organized, systematic exploration. Current techniques, like CoT, often sidestep this synergistic potential, imposing undue pressure on LLMs for on-the-spot precision. {By capitalizing on  LLMs' recursive capabilities, we emulate a hybrid human-algorithmic approach.} This is achieved through our use of algorithmic examples that capture the essence of exploration, from initial candidates to validated solutions. Thus emerges our concept of the \textit{Algorithm of Thoughts} (AoT), as illustrated in Figs. \ref{fig: cot_vs_aot} and \ref{fig:diff}. 

More broadly, our approach signifies a new paradigm of in-context learning. Instead of the traditional ``supervised-learning'' mold of \textsc{[problem, solution]} or \textsc{[problem, successive steps to solution]}, we present a new structure that covers \textsc{[problem, search process, solution]}. Naturally, when instructing an LLM using an algorithm, the anticipation leans towards the LLM simply imitating the algorithm's iterative thinking. However, what emerges as intriguing is the LLM's ability to infuse its own ``intuition'' to achieve a search efficiency that even \emph{surpasses} the algorithm itself (see Fig. \ref{fig: dfs_vs_aot}).

In the subsequent sections, we first situate our work within the existing literature, followed by a discussion of our principal idea. We then present our experimental results and probe a series of hypotheses related to this emerging capability of LLM before rounding off with a conclusion.

\newpage
\section{Related Work}
\noindent\paragraph{Standard Prompting.} Also known as input-output prompting, it provides a few input-output examples of the task before getting an answer for the test sample from the language model \cite{brown2020language}. Although this method is very general and does not need any special prompting strategy, the performance is also worse compared to more advanced methods \cite{shaoSyntheticPromptingGenerating2023, weiEmergentAbilitiesLarge2022, lyu2023faithful}.

\noindent\paragraph{Chain-of-Thought.} In CoT, LLMs are presented with examples where a given question $x$ unfolds through a chain of intermediate reasoning pieces $c_1,\ldots, c_n$ to reach an answer $y$, represented as $x\rightarrow c_1\rightarrow\ldots\rightarrow c_n\rightarrow y$ \cite{wei2022chain, lyu2023faithful}. By mimicking the examples in the context, the LLM automatically divides the solution into simpler linear steps to arrive at the answer, improving performance across numerous reasoning benchmarks. Self-consistency \cite{wangSelfConsistencyImprovesChain2022} is a widely used decoding strategy aimed at generating a variety of reasoning paths by choosing the final answer through a majority vote, though this necessitates additional generations. CoT can be further improved with integrating detailed algorithmic reasoning \cite{zhou2022teaching}. We also utilize algorithmic examples in AoT, however, they are for emerging the inherent heuristic of LLMs to lead the search and not designed to follow a specified pseudocode, or are on language tasks, e.g., creative writing. Contrary to CoT's linear progression, our approach pivots towards the explorative aspect of LLMs. We reconceptualize the $c_1,\ldots, c_n$ sequence, not merely as successive steps towards a solution, but as a dynamic, potentially mutable path that resembles an algorithmic search, allowing for exploration, recalibration, and non-linear progression.

\noindent\paragraph{Least-to-Most prompting (L2M).} Taking cues from educational psychology \cite{libby2008comparison}, L2M prompting directs the LLM to decompose the central problem into smaller subproblems. Each subproblem is tackled in sequence, with the outcome appended before progressing to the next  \cite{zhouLeasttoMostPromptingEnables2022, drozdovCompositionalSemanticParsing2022}. While this structured delineation is beneficial for broader generalization, it operates on the premise of finding a nearly perfect decomposition in a single attempt---ideal for problems with a clear-cut structure. Yet, when tasks intertwine with their decomposition complexities (like games of 24), this method's inflexibility becomes apparent.  Contrastingly, AoT not only underscores the active subproblem (as shown in Fig. \ref{fig: cot_vs_aot}), but also permits a more contemplative approach by entertaining various options for each subproblem, while maintaining efficacy even with minimal prompts.

\noindent\paragraph{Tree of Thoughts (ToT).} In the cases where each subproblem has multiple viable options to explore, linear reasoning paths from CoT or L2M substantially limit the coverage of the thought space. Considering possible options for each subproblem, the decision tree can be explored by external tree-search mechanisms (e.g., BFS, DFS) \cite{yaoTreeThoughtsDeliberate2023, jin2023human, sel2024skin}. Evaluation capabilities of LLMs can also be used to direct the search by pruning nodes that are hopeless to increase efficiency. However, ToT, due to its requirement for multiple queries to the LLM for a solution, demands significantly more computation than AoT. Additionally, it necessitates evaluating the potential of each search node in the in-context examples and writing specialized functions to extract information from model responses to maintain the tree structure externally. In stark contrast, AoT requires just a single prompt and no coding skills, greatly democratizing LLM use for complex problems.

\section{Algorithm of Thoughts}
Our strategy pivots on recognizing \textit{a core shortcoming} of current in-context learning paradigms. CoT, while enhancing the coherency of thought linkages leading to solutions, occasionally falters, presenting incorrect intermediate steps \cite{zelikman2022star, turpin2023language, lanham2023measuring}. Faithful CoT \cite{lyu2023faithful} ought to amend this by eliciting symbolic chains of reasoning where the LLM's output resembles task-specific pseudo-code, primed for deterministic execution like Python. The intention is only to use the thought processes but not the outputs and inputs of each link since they have a tendency to be unreliable. 
But, \textit{the occasional missteps of CoT may not necessarily be due to the LLM's inability to compute correctly}. The LLM, when confronted with questions that closely match conditions of previous in-context examples, may favor {echoing those outputs over generating the appropriate questions}. To shed light on this phenomenon, we designed an experiment. 

Querying \textit{text-davinci-003} for arithmetic tasks (e.g., `$11-2=$'), we prefixed them with multiple in-context equations converging to an identical output (e.g. `$15-5=10, 8+2=10$'). Our results, presented in Fig. \ref{fig: simple_insight}, reveal a steep decline in accuracy, suggesting that {the mere presence of correct reasoning in the context might inadvertently compromise even basic arithmetic skills}.

\begin{figure}[h]
\centering
\includegraphics[width=0.4\textwidth]{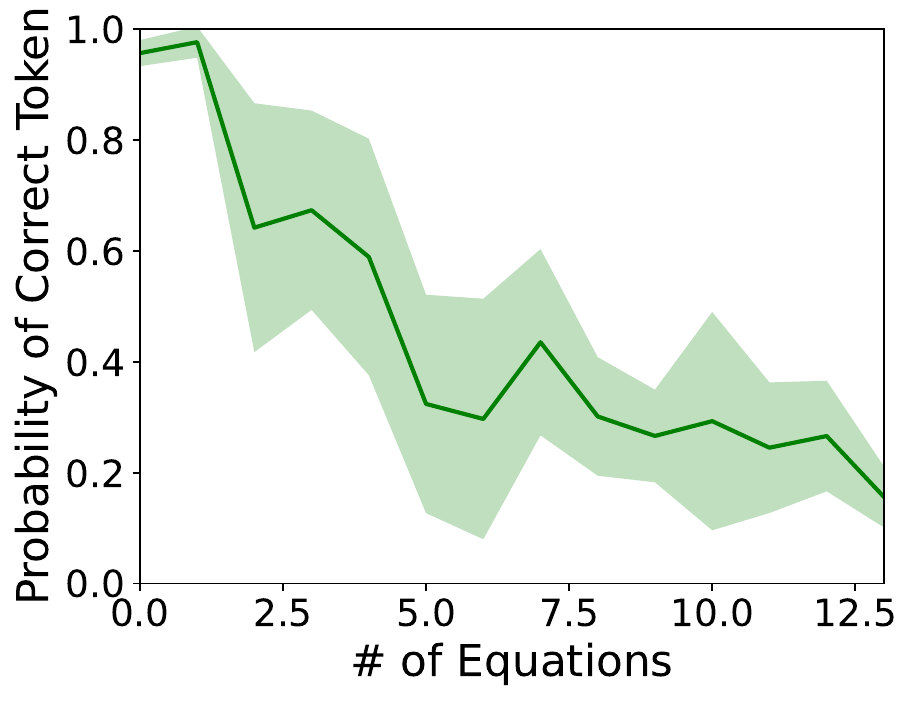}
\caption{The probability of generating the correct token as we add more in-context examples that are correct but possess identical outputs.}
\label{fig: simple_insight}
\end{figure}

To offset this bias, diversifying the outputs of examples might seem like a viable solution, but this could subtly skew the distribution of outputs. Merely adding unsuccessful trials, much like a random search, might inadvertently encourage the model to retry rather than truly solve. Capturing the true essence of algorithmic behavior, where both failed searches and subsequent recovering and learning from such attempts play a role, we incorporate \emph{in-context examples patterned after search algorithms}, notably depth-first search (DFS) and breadth-first search (BFS). See Fig. \ref{fig: cot_vs_aot} for an example. 

This paper focuses on a broad class of tasks reminiscent of tree-search problems. These tasks necessitate breaking down the main problem, crafting feasible solutions for each segment, and making decisions on the paths to either pursue or forsake, with the option of reevaluating more promising segmentations. Rather than posing separate queries for every subset, we leverage the iterative capabilities of the LLM to address them in one unified generation sweep. By confining ourselves to one or two LLM interactions, this approach naturally incorporates insights from antecedent context candidates and tackles intricate issues requiring an in-depth exploration of the solution domain. 
We also give insights into how small or big those thoughts should be and what type of in-context examples should be given to the LLM to promote token efficiency. Subsequently, we outline key components of tree-search algorithms and their manifestation in our framework.

\paragraph{1. Dividing the search into steps.}
Similar to creating step-by-step solutions in CoT or L2M, we also need to identify intermediate search layers. This is akin to creating examples for CoT, especially for tree-search problems, where the correct reasoning path resembles a CoT solution. The challenge lies in selecting the right chain from numerous candidates at each layer to reach the final answer. Thus, our focus will be more on generating the search process for in-context examples rather than how to solve each subproblem after selecting the next chain.

\begin{figure}[h]
\centering
\includegraphics[width=0.3\textwidth]{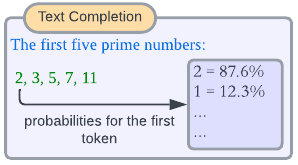}
\caption{An example highlighting the drawback of isolated sampling of sequenced ideas.  Input is denoted in blue, with the \textit{text-davinci-003} providing the green completions.}
\label{fig: simple_alignment_issue}
\end{figure}

\paragraph{2. Proposing Solutions to Subproblems.} A dominant approach in existing works involves direct sampling from LLM token output probabilities \cite{wangSelfConsistencyImprovesChain2022, yaoTreeThoughtsDeliberate2023}. Though effective for one-off answers \cite{kadavath2022language} 
, this method falls short in scenarios demanding a sequence of samples to be integrated or evaluated within subsequent prompts \cite{robinsonLeveragingLargeLanguage2022}. 
To minimize model queries, we adopt an uninterrupted solution-creation process.  Here, we directly and continuously generate solutions for the prevailing subproblem \emph{without} any generation pauses.

The benefits are three-fold. First, with all generated solutions existing within a shared context, there is no need for individual model queries for each solution evaluation. Second, while it may seem counterintuitive, isolated token or token group probabilities might not always yield meaningful choices. A simple illustration is found in Fig. \ref{fig: simple_alignment_issue}. When evaluated independently, the second-most probable token for our inaugural number is `$1$'---not qualifying as prime. But, when generation remains unbroken, the derived sequence is correct. This incongruence points towards the restrictive nature of the Markov property in sequence modeling.  Core to our perspective is the premise that for sequential tasks like algorithmic search, LLMs are more adept at generating entire sequences than intermittently pausing and re-initiating the token sampling process.

\paragraph{3. Evaluating the Promise of a Subproblem.} 
 Existing techniques lean on additional prompting to discern the potential of tree nodes, aiding decisions regarding exploration direction. Our observations suggest that if the most promising routes are encapsulated within the in-context examples, LLMs inherently gravitate towards prioritizing those promising candidates. This diminishes the need for intricate prompt engineering and allows the incorporation of intricate heuristics, whether intuitive or knowledge-driven. Again, the absence of disjoint prompts in our approach allows for an immediate assessment of candidate viability in the same generation.

\paragraph{4. Backtracking to a More Promising Node.} The decision of which node to explore next (including retracing to a prior node) inherently depends on the selected tree-search algorithm.
While previous studies \cite{yaoTreeThoughtsDeliberate2023} have employed external means, such as coded mechanisms for the search process, this restricts its broader appeal and entails additional customization. Our designs predominantly adopt a DFS approach supplemented by pruning.  The aim is to maintain proximity between nodes sharing the same parent, thereby encouraging the LLM to prioritize local over distant features. Additionally, we present performance metrics for the AoT approach grounded in BFS. Our reliance on the model's inherent capacity to glean insights from in-context examples obviates the necessity for additional mechanisms.

\paragraph{Expressiveness of LLMs with AoT.}
Recent works have investigated the expressivity of transformers with standard and CoT prompting \cite{chiang2023tighter, schuurmans2023memory, merrill2023expresssive, feng2023towards}. We provide the following theoretical result for AoT, which implies that it can tackle NP problems, extending from P problems that of COT's.

\begin{corollary}[Informal]
    Consider \textsc{TIME}$(a^n)$ as the class of problems for which a Turing machine exists that operates within a time complexity of $\mathcal{O}(a^n)$ for some $a\geq 1$. If a transformer can generate $a^n$ intermediate tokens to solve the problem when prompted by AoT, we have
    \begin{equation}
        \textsc{TIME}(a^n) \subseteq \textsc{AoT}(n),
    \end{equation}
    where $\textsc{AoT}(n)$ refers to the decoding steps by AoT when the input has $n$ tokens.
    \label{corollary: main}
\end{corollary}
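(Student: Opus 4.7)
The plan is to reduce this to the existing expressivity results for chain-of-thought prompting, in particular those of Feng et al.\ (2023) and Merrill--Sabharwal (2023), which establish that a (log-precision) transformer augmented with $T(n)$ intermediate decoding tokens can simulate any Turing machine running in $\mathcal{O}(T(n))$ steps. Since $\textsc{AoT}(n)$ is defined to allow $a^n$ intermediate tokens, the overall inclusion will follow once we argue that AoT is at least as expressive as CoT with the same intermediate-token budget.

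First I would recall the standard simulation: at each CoT step, the transformer emits tokens encoding the current TM configuration (head position, tape contents in a local window, finite-state control), and the attention mechanism reads back the previously emitted configuration to compute the next one. If the TM runs in time $\mathcal{O}(a^n)$, this simulation requires $\mathcal{O}(a^n)$ emitted tokens, which fits inside the $\textsc{AoT}(n)$ budget by assumption. Next I would observe that an AoT trace is strictly a generalization of a CoT trace: a linear chain $x \to c_1 \to \cdots \to c_m \to y$ is just a degenerate search tree with no branching or backtracking. Hence any CoT simulation of a TM can be embedded verbatim inside an AoT trace (e.g.\ as the sole DFS branch), and the inclusion $\textsc{TIME}(a^n) \subseteq \textsc{AoT}(n)$ follows.

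Two technical points deserve care and would constitute the bulk of a formal write-up. The first is pinning down the precision model of the transformer: to faithfully index into $a^n$ previously generated tokens, the positional encoding and attention computations must use $\Theta(n)$-bit numbers, which is handled by log-precision transformers in the cited literature; I would cite this rather than re-derive it. The second, and the main obstacle, is giving a clean formal definition of the class $\textsc{AoT}(n)$ that makes the inclusion meaningful rather than trivial---specifically, we must insist that the AoT prompt (the in-context algorithmic example) is of size independent of the instance $x$, so that the extra expressive power truly comes from the decoding budget and recurrence, not from hard-coding the TM into the prompt. Once this is formalized, the simulation sketched above goes through and the informal statement becomes a corollary of the CoT expressivity theorem applied with $T(n) = a^n$.
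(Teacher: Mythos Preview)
Your proposal is correct and follows essentially the same route as the paper: both arguments reduce the claim to the Merrill--Sabharwal expressivity theorem, observing that with an $a^n$ intermediate-token budget the transformer can simulate a Turing machine running in $\mathcal{O}(a^n)$ time. The paper's appendix proof is a two-sentence invocation of that theorem, merely noting that AoT's backtracking lets the token count scale exponentially; your write-up is more careful, explicitly framing AoT as a superset of CoT (a linear chain being a degenerate search tree) and flagging the precision and prompt-size issues the paper leaves implicit, but the underlying idea is identical.
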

The proof of the above corollary is given in the appendix.

\section{Experiments}
We show that AoT surpasses the performance of other single-prompt methods (e.g., standard, CoT/-SC prompting) and even that of strategies utilizing external mechanisms, such as ToT, across the benchmarks we tested. We present the results for the creative writing task in the appendix. In addition, we show that AoT continues to have an advantage over standard prompting or CoT even after fine-tuning. This implies that the issue with LLMs is not simply a minor misalignment or a deficiency in domain expertise. Rather, it underscores the necessity of AoT prompting. This approach is vital because the nature of the tasks we evaluated inherently demands a thorough exploration of solution paths, a requirement that goes beyond simple fine-tuning adjustments. For the generation of in-context examples, we have asked the authors to write down their search process and randomly chosen from that list. We have written them again in a simple structured way to have uniformity between the examples to create our AoT prompts. More details regarding this process for each task is given in the AoT setup subsections.

\subsection{Game of 24}
The game of 24 is a mathematical card game in which players are given four numbers and must use addition, subtraction, multiplication, and division (each operation can be used more than once) to manipulate those numbers to reach a total of 24. For instance, for the numbers `$8\:8\:5\:4$', one solution could be `$8 * (5 - (8 / 4)) = 24$'. At first glance, the game might appear straightforward. However, a cursory calculation suggests there are nearly 13,000 distinct expressions possible for any set of four numbers, making it a formidable challenge for present-day LLMs.

\paragraph{Task Setup.} Adhering to the setup detailed in \cite{yaoTreeThoughtsDeliberate2023}, we use games from indices 901-1000, sourced from the 1362 games ranked by relative difficulty at \textit{4nums.com}.  An attempt is considered successful if it is able to reach a total of 24 using the exact numbers provided and only the allowed operations.

\paragraph{Baselines.} Standard prompting and CoT are used in the 5-shot setting, with CoT integrating 3 steps for the operations. These methods are sampled 100 times, and the averaged success rates from these samples are reported. CoT-SC is also tested with 100 votes in our setup. For ToT, we use a breadth of 5. 

\paragraph{AoT Setup.} 
We employ the same 5-shot setting as in standard prompting and CoT baseline setup. Our in-context samples leverage a DFS-style search algorithm, which is the same version used when contrasting with traditional DFS in Fig. \ref{fig: dfs_vs_aot}. During each subtree exploration, dubbed either the `first step' or  `first operation', we choose two numbers---illustrated by the selection of 8 and 6 in the third 'first step' (i.e., subtree labeled `3') of Fig. \ref{fig: cot_vs_aot}---and a corresponding operation (e.g., $8-6$).  This operation results in a new number, 2, leaving us with three numbers in total. A thorough combing of these three numbers culminates in 19 leaf nodes, all visible under the `3' subtree in Fig. \ref{fig: cot_vs_aot}. In order to generate our in-context examples, we have randomly selected games that do not appear at test time. We asked the authors to write the search steps they used until they arrived at the answers. These are exactly the node selection, node expansion steps with inherent heuristics of the individuals. Then, we have selected randomly from these solutions and written them in a trivial structured way to assure uniformity between the examples. The exact prompts we use are given in the Prompts section under the `AoT (DFS)' subsection in the appendix.  We aim to assess two aspects: the ability of the LLM to pinpoint promising first operations, which directly impacts the number of resolved leaf nodes, and its performance against a conventional DFS. Details on the prompts are provided in the appendix. As our method emphasizes sequential generation over trajectory sampling, we operate with a temperature setting of 0.

\paragraph{Results.} From Table \ref{table: game_of_24_main_results}, it is evident that standard prompting combined with CoT/-SC significantly lags behind tree search methods when used with LLMs. The ``Standard + Refine'' result, showing a 27\% success rate, is referenced from \cite{yaoTreeThoughtsDeliberate2023}. This method involves iteratively asking the LLM (up to 10 iterations) to refine its answer if the initial one is incorrect. Meanwhile, ToT is limited to a maximum of 100 node visits, translating to several hundred LLM queries for each example. Remarkably, AoT achieves its results with just a \textbf{single query!} Despite reducing the number of requests by more than a factor of 100, AoT still outperforms ToT in this task. Furthermore, AoT is also more efficient than ToT in terms of the total number of prompt tokens given to the LLM and the completion tokens it generates. 

\renewcommand{\arraystretch}{1.2} 
\begin{table}[h]
\centering
\begin{tabular}{lllll}
\hline
\textbf{Method}    & \textbf{Success} & \textbf{Queries} & \textbf{PTs} & \textbf{CTs} \\ \hline
I/O & $7.3\%$          & $1$            & $164$          & $18$           \\
CoT                & $4.0\%$         & $1$            & $421$          & $46.2$         \\
CoT-SC   & $9.0\%$          & $100$            & $42$,$100$        & $4$,$620$       \\
I/O + Refine  & $27\%$           & $10$             & $458$          & $360$          \\ \hline
ToT $(b=5)$        & $69\%$           & $109.1$          & $13$,$900$        & $5$,$500$         \\
AoT (ours)         & $\mathbf{71}\boldsymbol{\%}$         & $1$            & $5$,$450$         & $998.4$        \\ \hline
\end{tabular}
\caption{Game of 24: success rates and the average number of LLM queries for each example. We give the average query count, prompt tokens (PT), and completion tokens generated by the LLM (CT).}
\label{table: game_of_24_main_results}
\end{table}

\paragraph{Error Analysis.}
Using a strictly LLM-centric approach---eschewing any external tooling or edits---we sought to categorize mistakes observed during the game of 24. This aids in highlighting areas for refinement when solely deploying LLMs. We've classified these errors into four distinct categories:
\textbf{1)} Out-of-token error: The LLM reaches its maximum token threshold without identifying a solution.
\textbf{2)} Expression misstep: The LLM has the correct logic or steps but fails when trying to express or formulate them into a coherent answer.
\textbf{3)} Non-finalization error: The LLM discovers the solution but continues its search without consolidating the finding.
\textbf{4)} Other errors: This umbrella term encompasses other mistakes like computational errors that result in overlooking the solution or furnishing incorrect answers. To exclusively showcase the AoT's search capabilities, we also present the AoT + Manual Resolution version. Here, once the LLM pinpoints a solution, its final articulation is manually processed---a strategy also employed by the ToT method. As evidenced in Table \ref{table: game_of_24_error_analysis}, a notable 7\% of mistakes stem from non-algorithmic factors like non-finalization and expression missteps.  In fact, with manual resolution, AoT attains a 78\% success rate, surpassing ToT. This underlines the potential for refining our prompt, especially in areas concerning recognizing and expressing successful problem resolutions. Additionally, the token limitation underscores the appeal of expanding the generative context window, which may further bolster LLMs' recursive reasoning when engaged with algorithmic examples.

\renewcommand{\arraystretch}{1.2} 
\begin{table}[h]
\centering
\begin{tabular}{ll}
\hline
\textbf{Error Type} & \textbf{Error}   \\ \hline
Out-of-token error         & $9\%$              \\
Expression misstep         & $4\%$              \\
Non-finalization error   & $3\%$              \\
Others              & $13\%$             \\ \hline
\textbf{Method}     & \textbf{Success} \\ \hline
ToT                 & $69\%$             \\
AoT                 & $71\%$             \\
AoT + Manual Resolution  & $78\%$             \\ \hline
\end{tabular}
\caption{Game of 24: AoT error analysis.}
\label{table: game_of_24_error_analysis}
\end{table}

\subsection{Mini Crosswords}
The $5\times 5$ mini crossword is a compact word puzzle featuring a grid of 25 squares arranged in a $5$-by-$5$ configuration. Players are tasked with filling the grid based on provided clues for each word. Clues are given for words that run both across (horizontally) and down (vertically). Words intersect at certain letters, offering additional hints to complete the puzzle.

\paragraph{Task Setup.}
Adhering to the setup outlined in \cite{yaoTreeThoughtsDeliberate2023}, we draw our prompts from games 136, 141, 146, 151, and 156 out of the 156 games available on \textit{goobix.com}. Our testing focuses on a set of 20 games, specifically games 1, 6, $\ldots$, 91, and 96.

\paragraph{Baselines.}
As done in the game of 24, we benchmark our method against established techniques: standard prompting, CoT, and ToT. For standard prompting, we provide both the crosswords and their respective solutions as in-context examples.  CoT augments this by prompting the retrieval of words for each of the ten clues---equally split between horizontal and vertical orientations. We directly extract the success rates of ToT from their paper for comparison.

\paragraph{AoT Setup.} 
We divide the process into two steps, each involving a query. Initially, we task the LLM with suggesting five potential words for each row and column. We then pinpoint the starting word candidates that have the highest compatibility with other words within the crossword framework.  This preliminary phase mirrors a 'warm-up' sequence in algorithm initialization. In the subsequent step, we exclusively leverage the LLM's algorithmic reasoning prowess, starting with the pre-selected word. The method involves cyclically choosing a likely option 
for insertion, generating candidate words, and assessing their compatibility with the words already on the board. If no match is found, the process shifts focus to another promising candidate. Otherwise, the word is added to the crossword, and the search continues. The cycle concludes either when the board is fully populated or no more suitable words can be found, which may be due to either incorrect existing words or the absence of matching words. Notably, this entire process unfolds within a single-generation window. The algorithmic examples in our prompt (detailed in the Appendix) include three that achieve game completion and two that predominantly populate the crossword, filling 8 or 9 slots.

\paragraph{Results.} Table \ref{table: minicrosswords_main_results} underscores AoT's proficiency in the mini crosswords task, showcasing a word success rate---a measure used in existing studies to represent the percentage of words correctly completed out of the total---that surpasses earlier methods reliant on various prompting techniques. It also outperforms ToT. An important observation is the sheer volume of queries ToT employs, exceeding AoT's by over a factor of 100. AoT also enjoys 25x reduction in total tokens required compared to ToT, a benefit of having everything in-context. 

\renewcommand{\arraystretch}{1.2} 
\begin{table}[h]
\centering
\begin{tabular}{lllll}
\hline
\textbf{Method} & \textbf{W. Success} & \textbf{Queries} & \textbf{PTs} & \textbf{CTs} \\ \hline
I/O             & $14\%$                & $1$              & $790.3$        & $30.5$         \\
CoT-SC          & $15.6\% $             & $1$              & $1$,$400$         & $1$,$600$         \\
ToT             & $46.5\%$              & $>200$           & $96$,$700$        & $21.8$k        \\
AoT (ours)      & $\mathbf{52}\boldsymbol{\%}$                & $2$              & $3$,$800$         & $975.6$        \\ \hline
\end{tabular}
\caption{$5\times 5$ mini crosswords word: word success rates and the average number
of LLM queries for each example. We give the average query count, prompt tokens (PT), and completion tokens generated by the LLM (CT).}
\label{table: minicrosswords_main_results}
\end{table}

\paragraph{Error Analysis.} To understand the prevalent mistakes made by AoT, we've categorized the errors into four distinct categories. In our analysis for each game, we focus on the initial error the LLM produces while charting its reasoning path, given that an early error typically cascades into subsequent failures. \textbf{1)} No preselections: LLM fails to generate compatible words essential for the warm-start phase. Given a correctly preselected word, the second phase for recursive reasoning can exhibit errors including: \textbf{2)} Expression misstep: The LLM mistakenly believes it has exhausted all choices and jumps to an answer prematurely. \textbf{3)} Incorrect pattern extraction: The LLM wrongly extracts a pattern based on the current board layout. \textbf{4)} Erroneous word placement: Despite recognizing the correct pattern, the LLM selects a mismatched word or misses better-fitting alternatives. Navigating the crossword complexity arises from outdated terms and esoteric references.
Predominantly, the errors observed are due to misguided word placements followed by pattern misinterpretations. Also, the LLM seems challenged in aligning letters at precise indices to create word structures— an obstacle circumvented by an external mechanism in the ToT framework.

\renewcommand{\arraystretch}{1.2} 
\begin{table}[h]
\centering
\begin{tabular}{ll}
\hline
\textbf{Error Type}         & \textbf{Error} \\ \hline
No preselections            & $15.8\%$         \\
Expression misstep          & $5.3\%$         \\
Incorrect pattern extraction & $26.3\%$         \\
Erroneous word placement   & $52.6\%$         \\ \hline
\end{tabular}
\caption{Breakdown of errors in $5\times 5$ mini crosswords with AoT. Numbers indicate the relative percentage of each error type among all errors.
}
\label{table: crossword_error_analysis}
\end{table}

\subsection{Finetuning}
In order to eliminate the possibility that prior experiments lacked domain knowledge or were misaligned with the task, even after few-shot prompting via standard prompting or CoT, we also finetuned GPT-3.5-Turbo using OpenAI's API with 900 examples with CoT and AoT. In Table \ref{table: finetuning}, we can see that although GPT-3.5-Turbo had similar solution rates for the Game of 24 with CoT and AoT, AoT fine-tuning improved the model by 60\% compared to 8\% for CoT. This shows that fine-tuning alone cannot emerge implicit non-linear thinking, and LLMs still require explicit exploration of possible options to arrive at a solution. This is similar to chess grandmasters being able to find better moves than others even when they play without thinking deeply. However, to find the truly great moves, they are also required to deliberately explore the possibility of space.

\begin{table}[h]
    \centering
    \begin{tabular}{lll}
    \hline
    \textbf{Method} & \textbf{w/o finetuning} & \textbf{w/ finetuning} \\ \hline
    CoT             & 3\%                     & 12\%                   \\
    AoT             & 3\%                     & \textbf{63\%}          \\ \hline
    \end{tabular}
    \caption{\textit{AoT's advantage continues even after finetuning.} Finetuning results on the Game of 24 on 900 examples with CoT and AoT prompting.}
    \label{table: finetuning}
\end{table}

\section{Discussion}
In this section, we delve into crucial aspects to consider when crafting prompts for AoT, using the game of 24 as our primary case study.

\paragraph{Can AoT surpass the DFS it is patterned after?} A core query of ours is to ascertain if the LLM has the capability to not only mirror but also outdo the efficiency of the algorithm introduced in-context. As evidenced in Fig. \ref{fig: dfs_vs_aot}, AoT systematically navigates fewer nodes than its DFS counterpart. While DFS employs a uniform strategy when choosing the subsequent subtree to investigate, AoT's LLM integrates its inherent heuristic.  This amplification over the base algorithm exemplifies the advantages of LLM's recursive reasoning capability.

\begin{figure}[h]
\centering
\includegraphics[width=0.45\textwidth]{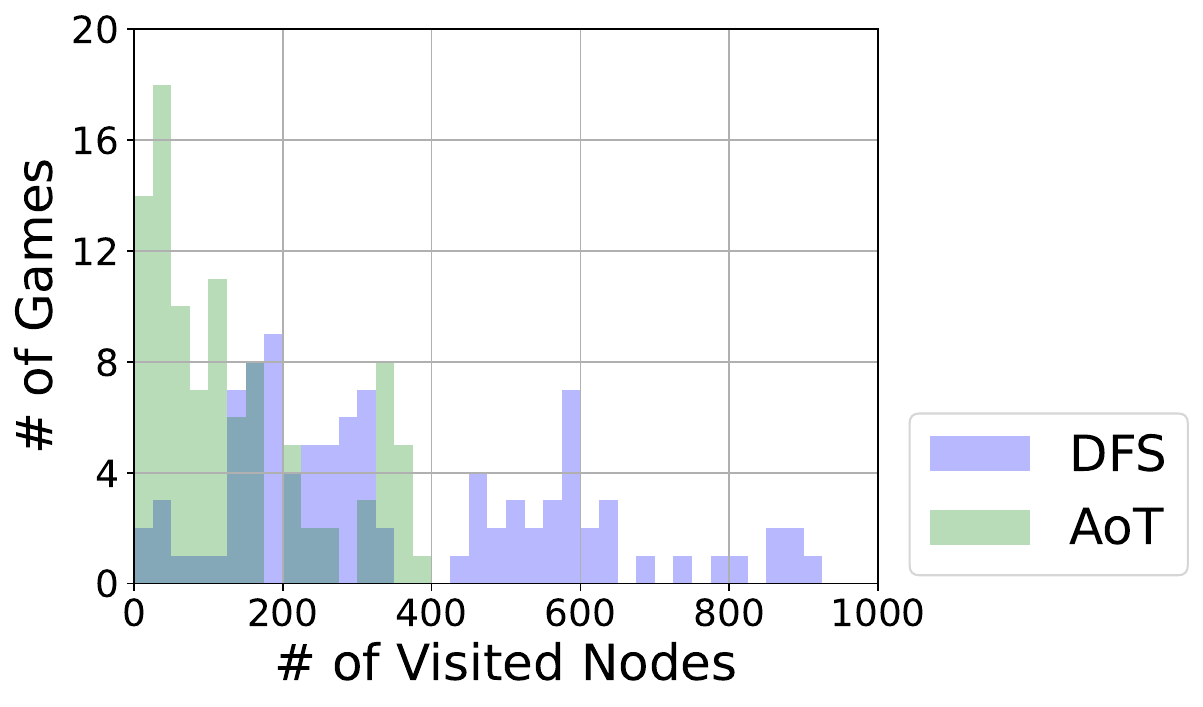}
\caption{Histogram showing the number of visited nodes for AoT and DFS in the Game of 24.}
\label{fig: dfs_vs_aot}
\end{figure}

\paragraph{How does the search step count within the algorithmic example modulate AoT's behavior?} We begin with the standard AoT prompt and modify the subtree explorations. In AoT (Short), each in-context example uses one or two steps to reach a solution, while AoT (Long) incorporates three to five extra subtree explorations. The impact on total search steps is illustrated in Fig. \ref{fig: short_vs_normal_vs_long}. Our observations highlight longer generations for AoT (Long) and shorter ones for AoT (Short) relative to the original AoT. This suggests that the search step count introduces an implicit bias on the LLM's search velocity. Notably, even when navigating incorrect steps, it's essential to emphasize the exploration of promising directions. 

\begin{figure}[h]
\centering
\includegraphics[width=0.43\textwidth]{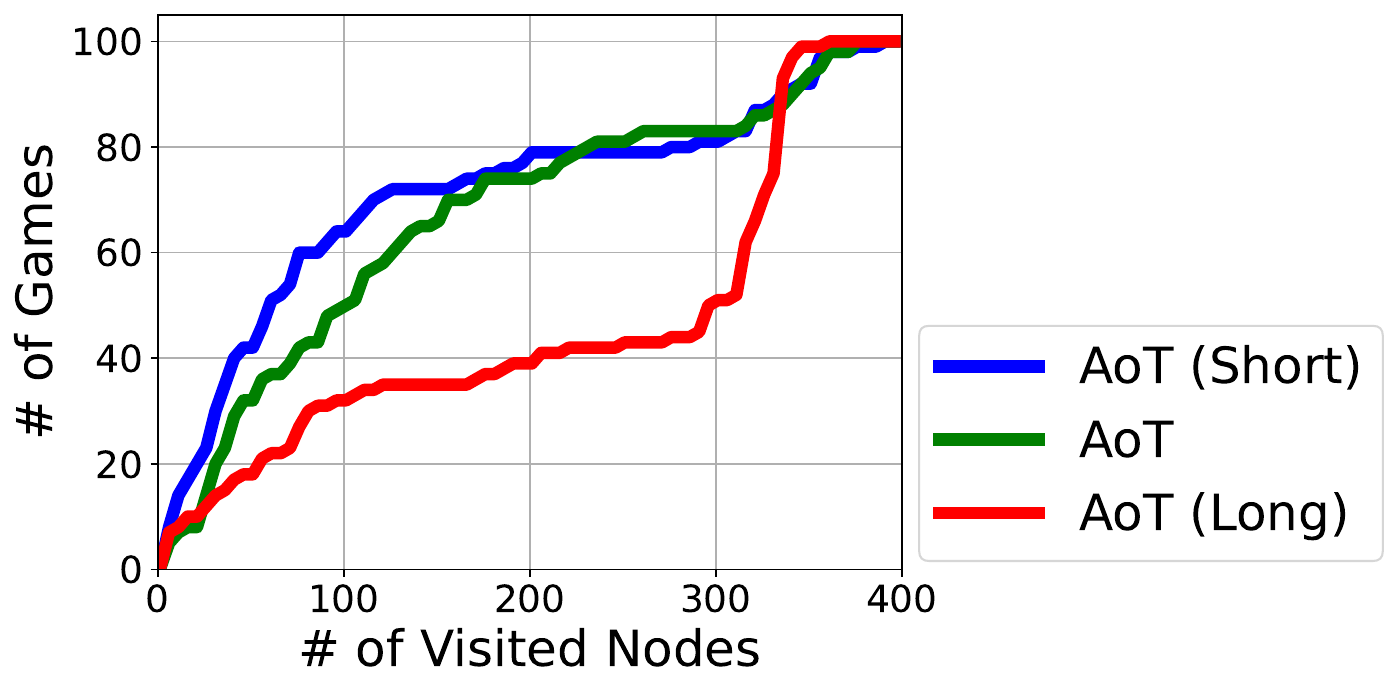}
\caption{Comparison of AoT with shorter and longer in-context examples prompted AoT versions: cumulative number of games for the number of visited nodes. }
\label{fig: short_vs_normal_vs_long}
\end{figure}

\paragraph{Can AoT be used for question-answering tasks?} To answer this question, we have followed the same structure to the Creative Writing task (given in the appendix of our paper) to evaluate AoT and the baselines on the first 100 questions of well-known GSM8K and StrategyQA benchmarks. Briefly, we implemented a zero-shot AoT prompt for StrategyQA and GSM8K that proposes 3 strategies and expands them with detail to select the best one. As seen in Table \ref{tab: qa}, we see a slight boost on this task due to GPT-4 with CoT already being competent.

\begin{table}[h]
    \centering
    \begin{tabular}{ccc}
    \hline
    \textbf{Method} & \textbf{GSM8K} & \textbf{StrategyQA} \\ \hline
    IO              & 51\%           & 73\%                \\
    CoT             & 86\%           & 82\%                \\
    ToT             & 90\%           & 83\%                \\
    AoT             & 89\%           & 84\%                \\ \hline
    \end{tabular}
    \caption{Performance comparison of different methods on question-answer tasks using GSM8K and StrategyQA benchmarks. The AoT model shows competitive performance, especially when compared with the CoT and ToT methods.}
    \label{tab: qa}
\end{table}

\paragraph{Can AoT work as other dynamic programming methods?}
We have also tested AoT and the baselines on the traditional dynamic programming problems Coin Change and Edit Distance, where DFS and BFS have explosive complexities. However, another DP method named tabulation can more easily solve these problems. Since ToT cannot take the form of tabulation, and has to either DFS or BFS, we decided not to include those poor results to be fair. However, one can use a single leaf node with a CoT prompt to solve these problems. There, ToT's performance can be considered the same as that of CoT's. Please refer to Table \ref{tab:dp_performance} for the detailed results.

\begin{table}[h]
\centering
\begin{tabular}{ccc}
\hline
\textbf{Problem} & \textbf{Coin Change} & \textbf{Edit Distance} \\ \hline
I/O              & 72\%                 & 61\%                   \\ 
CoT              & 76\%                 & 64\%                   \\ 
AoT              & 96\%                 & 90\%                   \\ \hline
\end{tabular}
\caption{Performance comparison of AoT with traditional dynamic programming methods on solving Coin Change and Edit Distance problems.}
\label{tab:dp_performance}
\end{table}

\paragraph{Can AoT help other SOTA LLMs?}
We investigate AoT for other SOTA LLMs, Claude 3 and Gemini 1.5 Pro, to see if it provides a significant boost for them as well on the game of 24. We see that both Claude 3 and Gemini 1.5 Pro benefit significantly. We were unable to run Gemini 1.5 Pro on CoT-SC due to API access not being available yet. Please refer to Table \ref{tab:model_results} for detailed results.

\begin{table}[h]
\centering
\begin{tabular}{cccc}
\hline
\textbf{Method} & \textbf{GPT-4} & \textbf{Claude 3} & \textbf{Gemini 1.5 Pro} \\ \hline
IO              & 7\%            & 6\%               & 6\%                     \\
CoT-SC          & 9\%            & 9\%               & -                       \\
AoT             & 71\%           & 68\%              & 55\%                    \\ \hline
\end{tabular}
\caption{Additional language model results for the AoT, CoT-SC, and IO methods across different models.}
\label{tab:model_results}
\end{table}

\section{Conclusion}
This paper presents the \emph{Algorithm of Thoughts}, a pioneering prompting strategy to navigate reasoning pathways in LLMs using minimal queries. Our findings reveal that this method not only substantially surpasses prior single-query techniques but also outperforms external tree-search implementations. Such an approach augments the potential to streamline idea discovery in LLMs, balancing both cost and computational demands. Future work includes designing token-efficient algorithmic examples, developing adaptive mechanisms for ``tunnel-vision'' activation to expedite the search, and deepening the understanding of this fresh mode of in-context learning from theoretical angles.

\section{Limitations} While AoT substantially cuts down on the number of queries relative to ToT, its resource demands exceed those of standard prompting and CoT, a consequence of its extensive exploration of ideas via token generation. Crafting token-efficient algorithmic examples is one direction of future research. It is also pertinent to highlight that we conducted our tests exclusively with GPT-4. Though more costly than other LLMs, GPT-4's advanced capabilities appear pivotal for AoT's optimal functioning; models of lesser caliber might not yield comparable performance boosts from AoT.

\newpage
\section*{Acknowledgments}
This work was supported in part by the Amazon Research and Virginia Tech Initiative for Efficient and Robust Machine Learning and the National Science Foundation (Grants \#2331775 and \#2312794).

\section*{Impact Statement}
This paper presents work whose goal is to advance the field of Machine Learning. There are many potential societal consequences of our work, none which we feel must be specifically highlighted here.

\bibliography{references}
\bibliographystyle{icml2024}

\newpage
\appendix
\onecolumn
\section{Game of 24 - Additional Details}
In order to avoid confusion in our analysis of AoT in the game of 24, we give additional details in terms of terminologies we use as well as their direct implications in the performance figures. An Illustration of these are given in Fig. \ref{fig: illustration_definition}.

\begin{figure}[h]
\centering
\includegraphics[width=1.0\textwidth]{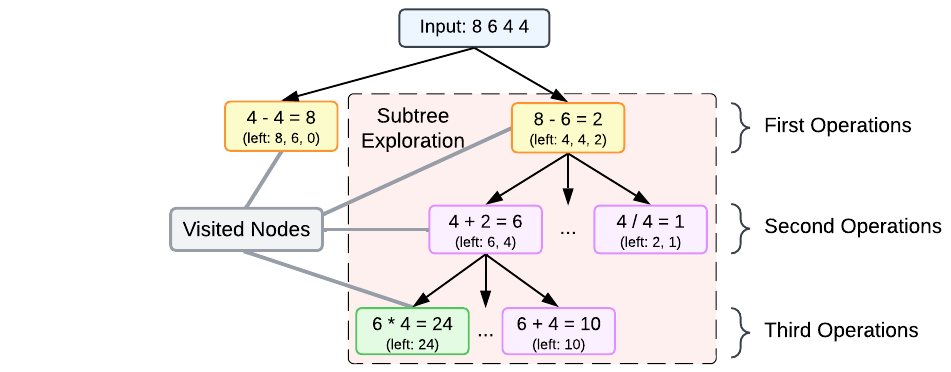}
\caption{An illustration of terminologies we use for the game of 24. The yellow nodes represent the first operations and the states they lead to; the green node represents the node where we find the solution; all other nodes are represented by pink.}
\label{fig: illustration_definition}
\end{figure}

\paragraph{First operations / First iterations.} This represents the scenario that after we choose the first two number in the game of 24, the case of either adding, subtracting, multiplying or dividing them.

\paragraph{Subtree Exploration. } This denotes searching all or most of the nodes coming from the same state, typically states with less than four numbers left.

\paragraph{Number of nodes visited.} This is the number of states that the method has been on the game of 24. Each state is the set of number we are left with, after our operations in the numbers. For example, after the first operation we might be left with the numbers `$8\:3\:1$'. This set of numbers represent a state, as well as the state of `$8\:3$' that we will be left with after another operation of `$8*1=8$'.

\section{Creative Writing}
We use the creative writing task, also used by \cite{yaoTreeThoughtsDeliberate2023}, where the LLM is provided with four arbitrary sentences. The objective is to craft a cohesive narrative divided into four paragraphs, with each paragraph culminating in one of the given sentences. This exercise not only fosters creativity but also emphasizes strategic deliberation.

\subsection{Task Setup}
Sentences are randomly sourced from \textit{randomwordgenerator.com}, resulting in 100 distinct sets of inputs. Given the absence of predetermined correct answers, the primary focus lies in evaluating the coherence of the responses. We have noted that GPT-4 consistently aligns with these input guidelines. Evaluation is centered around assessing passage coherence using a GPT-4 zero-shot prompt, where each output is rated on a scale of 1 to 10. Each task response undergoes five such evaluations, with their scores being averaged subsequently.

\subsection{Baselines}
For this task, both standard and CoT prompts are employed without preliminary training. While the standard prompt directly guides the LLM to fashion a cohesive narrative based on stipulated parameters, the CoT prompt obliges the model to initially outline a succinct plan prior to drafting the narrative, serving as an intermediate cognitive bridge. For each task iteration, ten samples are generated using both the standard and CoT methods. Results of the ToT approach are presented without modification.

\subsection{AoT Setup}
Mirroring ToT's methodology, the task is tackled in a zero-shot setting. Our prompt instructs the model to first formulate five distinct plans. Subsequent to this, the model selects the most promising among them to shape a narrative and then refines it for optimal coherence. The exact prompts used for this zero-shot approach will be provided in the subsequent section.

\subsection{Results}
As depicted in Fig. \ref{fig: text_performance}, AoT outpaces other singular query prompting techniques such as standard prompting and CoT in terms of performance. It also exhibits a marked improvement over ToT, although the difference is not statistically significant. Comprehensive scores, along with the average query count needed for each method, are consolidated in Table \ref{table: creative_writing_results}. Notably, AoT necessitates fewer queries compared to ToT.

\begin{figure}[h]
\centering
\includegraphics[width=0.45\textwidth]{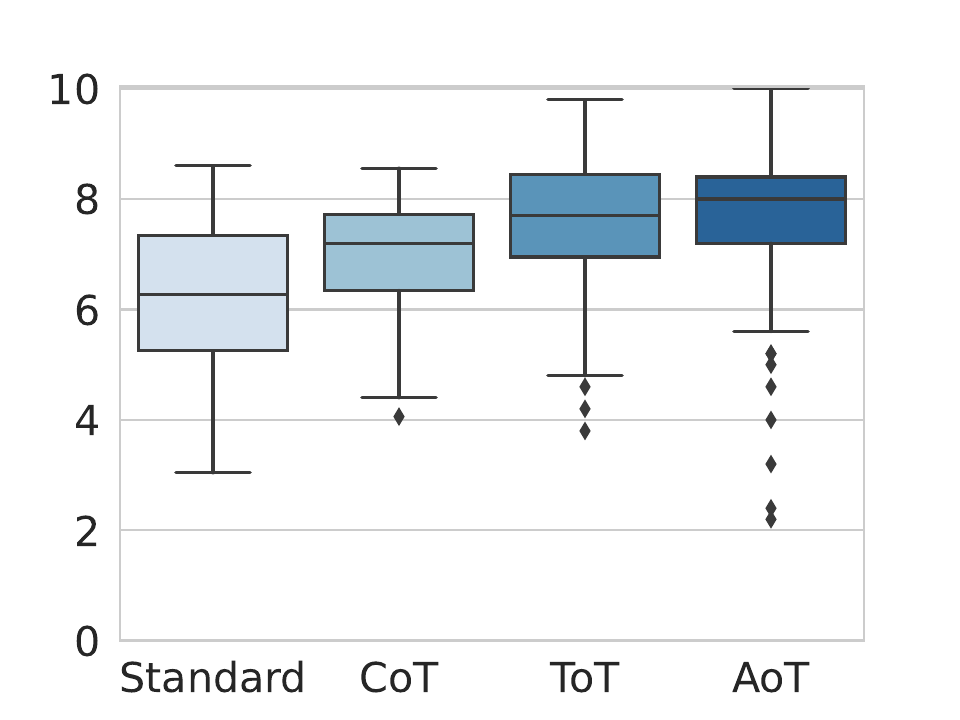}
\caption{Comparison of the standard prompting, CoT, ToT and AoT on the creative writing task.}
\label{fig: text_performance}
\end{figure}

\renewcommand{\arraystretch}{1.2} 
\begin{table}[h]
\centering
\begin{tabular}{lll}
\hline
\textbf{Method}    & \textbf{Score} & \textbf{Avg. Queries} \\ \hline
Standard Prompting & $6.19$           & $1$                     \\
CoT                & $6.93$           & $1$                     \\
ToT                & $7.56$           & $20$                    \\
AoT                & $\mathbf{7.58}$           & $1$                     \\ \hline
\end{tabular}
\caption{Performance of the methods determined by GPT-4.}
\label{table: creative_writing_results}
\end{table}

\section{CoT vs. Single Iteration AoT in the Game of 24}
To demonstrate that the tree search mechanism is fundamentally distinct from the CoT prompting, even in scenarios where AoT's in-context examples include only a single initial operation in the game of 24, we draw a comparison between AoT (Short) and CoT. In this setup, AoT (Short) determines the first operation and subsequently conducts a tree search on the remaining three numbers. Interestingly, AoT (Short) achieves a success rate of $48\%$, while CoT lags significantly, securing only $4\%$. These results underscore the notion that even a rudimentary search mechanism can lead to significant performance enhancements.

\section{Detailed Analysis on the Effect of the Length of the Prompts}
In this section, we delve deeper into Fig. \ref{fig: short_vs_normal_vs_long} by presenting histograms for the successful, unsuccessful, and total games of `24', considering the number of initial steps in methods AoT (Short), AoT, and AoT (Long). These are displayed in Figs. \ref{fig: text_performance_successful}-\ref{fig: text_performance_all}.

From these figures, it becomes evident that the length of the prompts, measured by the number of initial steps included in in-context examples, correlates with the length of their solutions to test examples. This trend is consistent across all three cases, suggesting that AoT's strategy in determining the number of initial steps is influenced by its in-context examples.

Interestingly, when AoT is provided a well-balanced set of initial steps that emphasize the most promising operations, it excels in solving the majority of games in earlier iterations. This indicates AoT's capacity to prioritize swift problem-solving without sacrificing performance. This tendency is also observed in AoT (Long), albeit with a somewhat reduced success rate, as illustrated in Fig. \ref{fig: text_performance_successful}.

\begin{figure}[h!]
\centering
\includegraphics[width=0.8\textwidth]{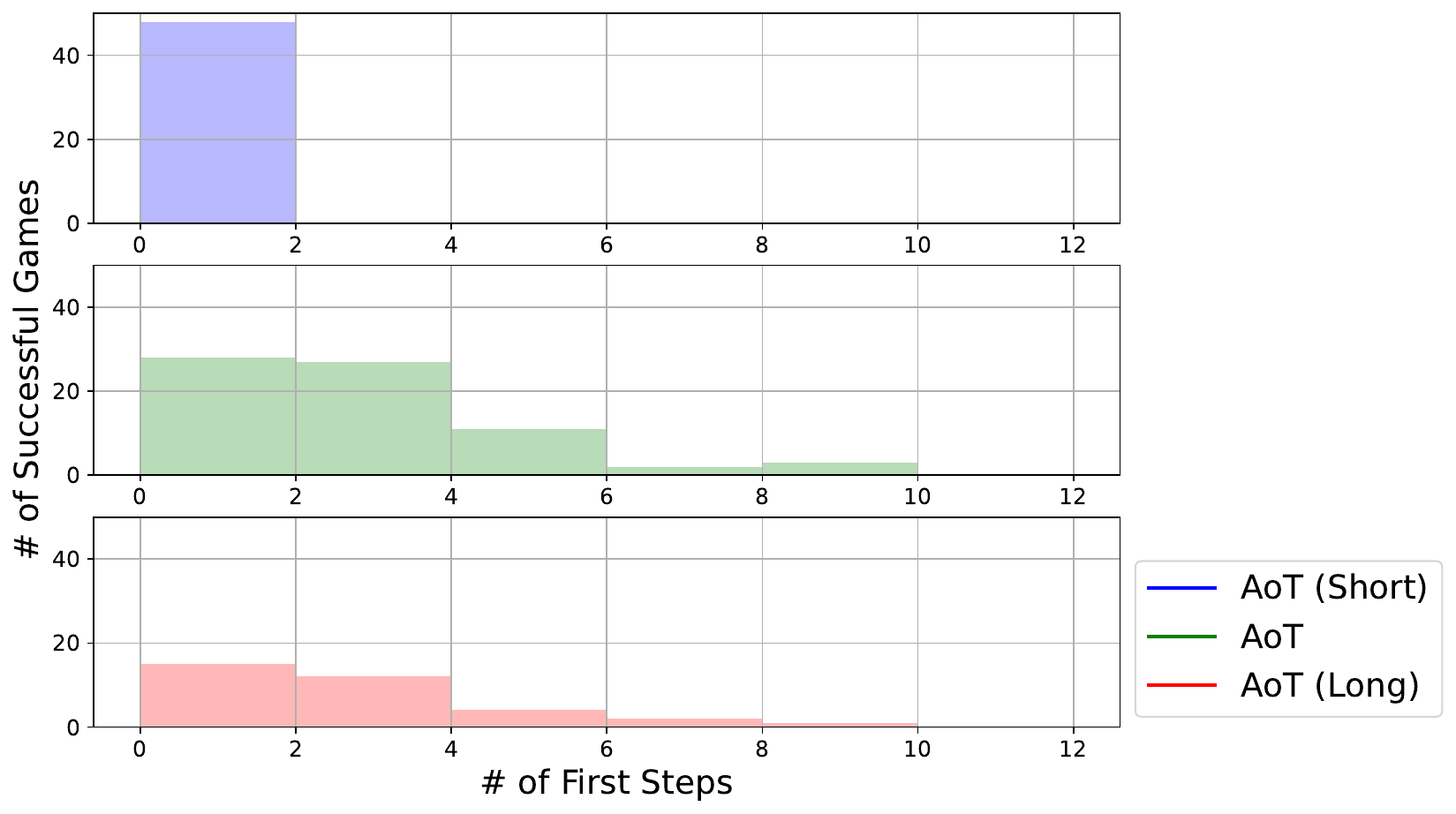}
\caption{Histogram of the number of successful games with respect to the number of first steps for AoT (Short), AoT and AoT (Long).}
\label{fig: text_performance_successful}
\end{figure}

\begin{figure}[h!]
\centering
\includegraphics[width=0.8\textwidth]{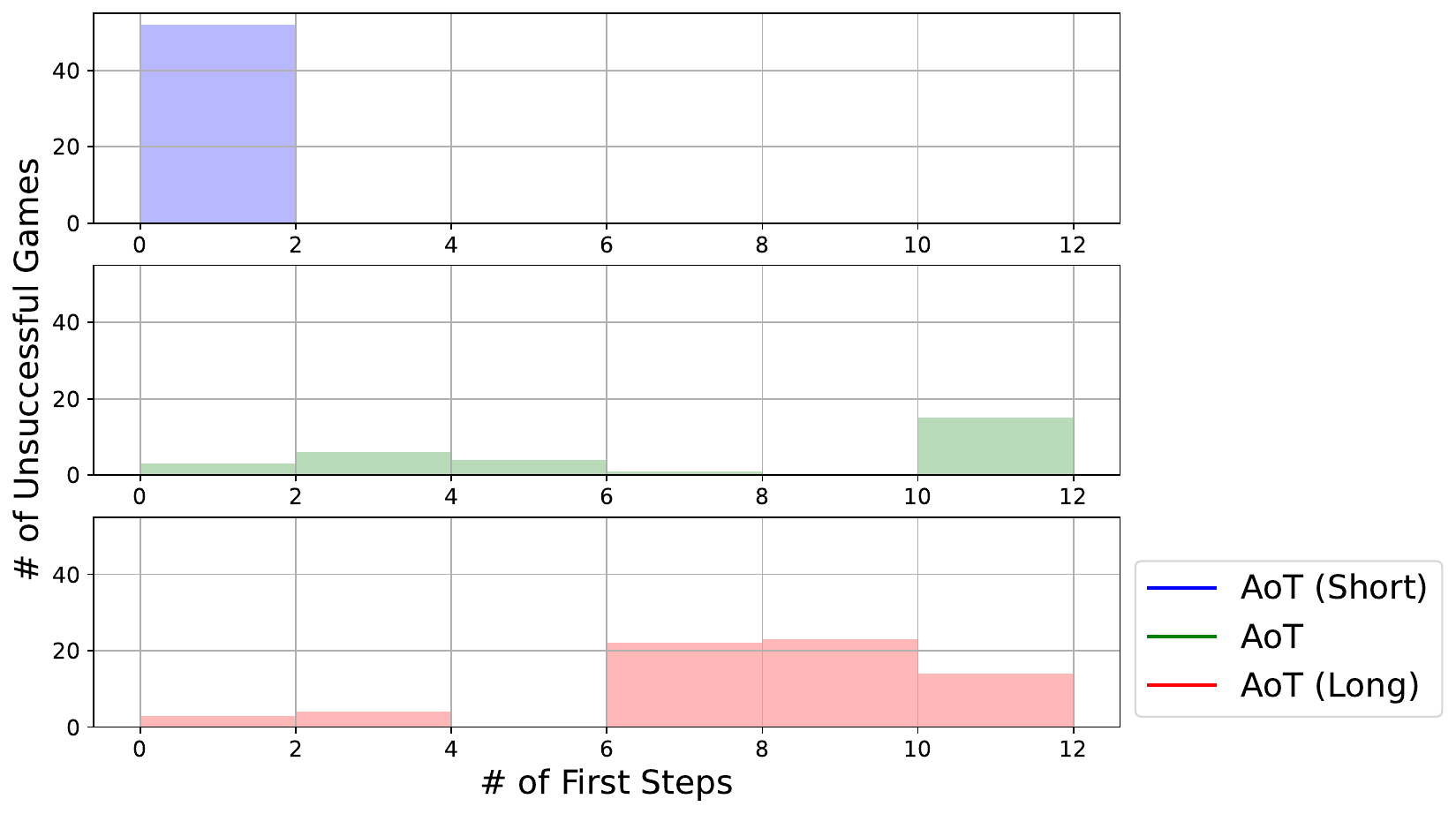}
\caption{Histogram of the number of unsuccessful games with respect to the number of first steps for AoT (Short), AoT and AoT (Long).}
\label{fig: text_performance_unsuccessful}
\end{figure}

\begin{figure}[h!]
\centering
\includegraphics[width=0.8\textwidth]{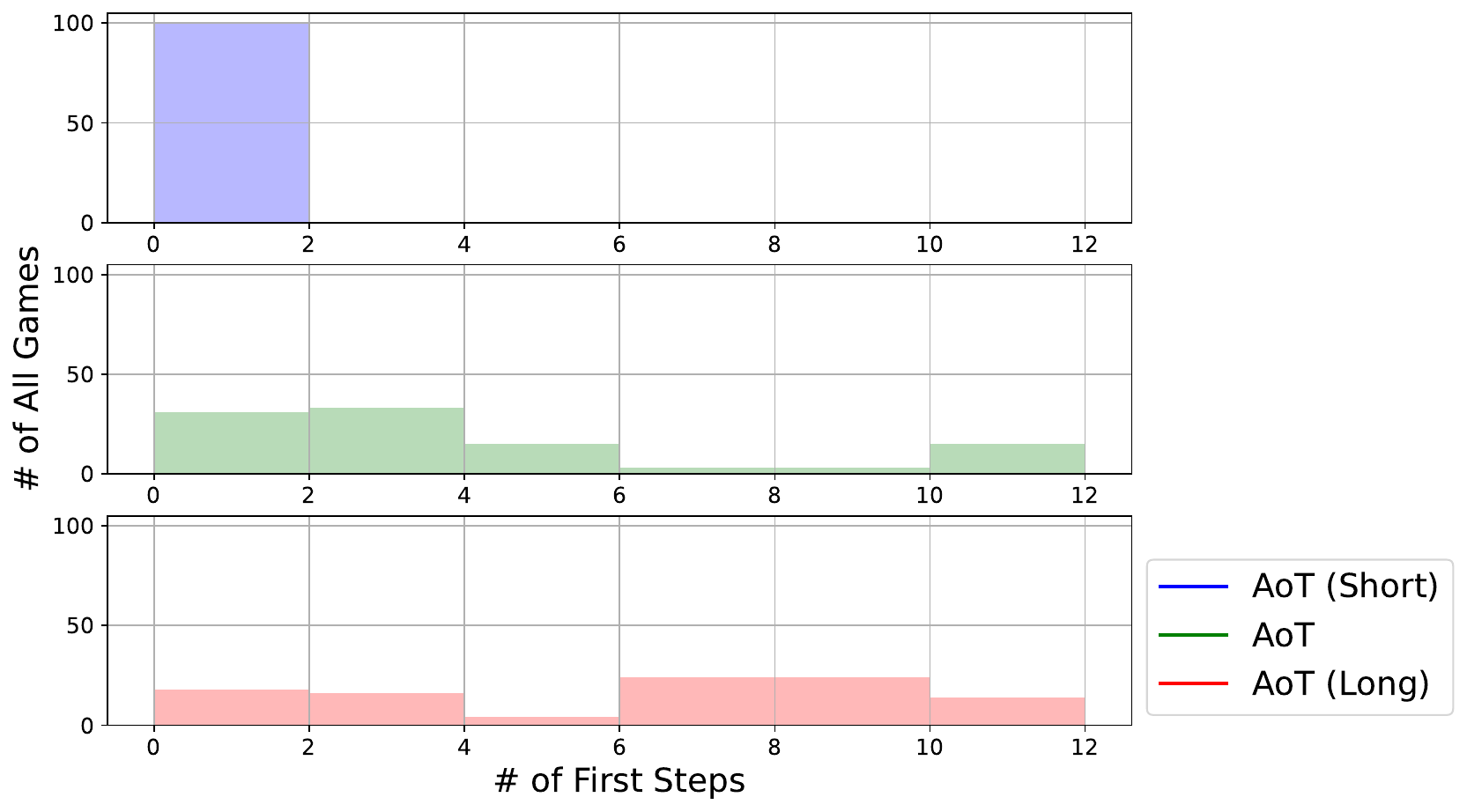}
\caption{Histogram of the number of all games with respect to the number of first steps for AoT (Short), AoT and AoT (Long).}
\label{fig: text_performance_all}
\end{figure}

\section{Proof of Corollary \ref{corollary: main}}
\begin{corollary}
    Consider \textsc{TIME}$(a^n)$ as the class of languages L for which a Turing machine exists that operates within a time complexity of $O(a^n)$ for some $a\geq 1$. For a transformer generating $O(a^n)$ intermediate tokens with AoT, we have
    \begin{equation}
        \textsc{TIME}(a^n) \subseteq \textsc{AoT}(n),
    \end{equation}
    where $\textsc{AoT}(n)$ refers to the decoding steps by AoT when the input has $n$ tokens.
\end{corollary}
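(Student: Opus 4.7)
The plan is to follow the standard transformer-simulates-Turing-machine recipe used for CoT expressivity results (e.g., \citet{feng2023towards}), and then observe that AoT's length budget is what upgrades the simulation from polynomial to exponential time. Concretely, I would take any language $L \in \textsc{TIME}(a^n)$ with an associated Turing machine $M$ running in time $T(n) = O(a^n)$, and build a transformer $\mathcal{T}$ together with an AoT-style prompt such that the generation trace of $\mathcal{T}$, when decoded, reads off the full computation history of $M$ on input $x$ of length $n$.

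First, I would fix an encoding of a Turing machine configuration (tape contents, head position, control state) as a fixed-format token block. I would then invoke the construction (as in the CoT expressivity literature) that builds a constant-depth, constant-width transformer whose next-token behavior implements one step of $M$'s transition function: the attention layer locates the current head position and state, and the feed-forward layer applies the transition table to emit the updated symbol/state/head-move token. The AoT prompt would consist of an algorithmic in-context demonstration that exhibits this ``write the next configuration'' behavior on short toy inputs, together with a convention that the trace proceeds configuration-by-configuration until a halting state token is emitted, at which point the accept/reject decision is written out. Since configurations have $O(T(n)) = O(a^n)$ size and there are $O(a^n)$ of them, the whole trace has length $O(a^n)$, which matches exactly the number of intermediate tokens we are permitted to generate.

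Second, I would verify the inclusion direction. Given the simulator above, for any $x$ of length $n$ we get an AoT decoding trace of length $O(a^n)$ that ends with the correct accept/reject bit, so membership in $L$ is decidable by AoT in $O(a^n)$ steps, i.e.\ $L \in \textsc{AoT}(n)$. This is the content of the claimed inclusion. I would emphasize that the only essential difference from the CoT case is the available trace length: CoT style arguments cap the intermediate tokens at a polynomial in $n$, yielding $\textsc{TIME}(\mathrm{poly}(n))$; here the hypothesis that $O(a^n)$ intermediate tokens may be generated is exactly what lets the same simulator run to completion on an exponential-time machine. The search/backtracking flavor of AoT is not strictly needed for the corollary, but I would note that AoT's ``evaluate, backtrack, resume'' template is naturally suited to carrying configuration state across many decoding steps without collapsing it, which is what makes the long trace coherent.

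The main obstacle, and the step I would allocate the most care to, is the single-step simulation lemma: showing that a fixed transformer with appropriate positional encodings can, after consuming a serialized configuration $C_t$, deterministically emit $C_{t+1}$. This requires (i) an addressing scheme so that attention can retrieve the symbol under the head, (ii) arithmetic on the position index to move the head left or right, and (iii) local rewriting of exactly one tape cell while copying the rest. These are all known to be implementable in a bounded-precision transformer, but the bookkeeping — particularly keeping arithmetic exact under bounded precision across $O(a^n)$ steps — is the delicate part; I would either cite the existing construction as a black box or, if a self-contained argument is wanted, spend the bulk of the appendix on this lemma and then derive the corollary as a one-line consequence.
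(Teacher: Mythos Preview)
Your approach is essentially the paper's: both reduce the corollary to an existing transformer-simulates-Turing-machine result and observe that the $O(a^n)$ token budget granted by AoT is what upgrades the reachable time class from polynomial to exponential. The paper is far more terse than you are---its entire proof is to note that AoT's backtracking structure yields exponentially many intermediate tokens and then invoke Theorem~2 of \citet{merrill2023expresssive} as a black box, without unpacking any of the configuration encoding or single-step simulation you describe.

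One small arithmetic slip to fix if you do write out the construction: you say configurations have size $O(a^n)$ and there are $O(a^n)$ of them, giving total trace length $O(a^n)$. That product is $O(a^{2n})$, which overruns the stated budget. The standard constructions you reference avoid this by emitting only the per-step \emph{delta} (new state, head move, rewritten cell) rather than the full tape, so each step costs $O(\mathrm{poly}(n))$ tokens and the full trace stays within $O(a^n)$ up to polynomial overhead. If you take the black-box route---which is exactly what the paper does---this bookkeeping is already absorbed into the cited theorem and the issue disappears.
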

\begin{proof}
    Since with AoT prompting, we can backtrack and continue from other nodes, the number of intermediate tokens scale exponentially with the depth of the problem over the number of possible actions in each leaf node. Then, directly by Theorem 2. in \citet{merrill2023expresssive}, we have the stated result implying the capability of solving NP problems with AoT prompting.
\end{proof}

\section{Prompts}

\subsection{Game of 24}
Below, we represent the specific prompts employed for the various methods detailed in the experiments section. It's important to note that the terms ``System'',``User'', and ``Assistant'' are utilized to denote the \textit{roles} within the OpenAI API when operating in chat completion mode. The line breaks serve to show the transitions between the user and assistant interactions within the API.

\paragraph{AoT (DFS)}
\leavevmode\begin{lstlisting}[numbers=none]
System:
        Use numbers and basic arithmetic operations (+ - * /) to obtain 24. When considering the next steps, do not choose operations that will result in a negative or fractional number. In order to help with the calculations, the numbers in the parenthesis represent the numbers that are left after the operations and they are in descending order.
        
        Another thing we do is when there are only two numbers left in the parenthesis, we check whether we can arrive at 24 only by using basic arithmetic operations (+ - * /). Some examples regarding this idea:
        (21 2) no
        since 21 + 2 = 23, 21 - 2 = 19, 21 * 2 = 42, 21 / 2 = 10.5, none of which is equal to 24.
        (30 6) 30 - 6 = 24 yes
        (8 3) 8 * 3 = 24 yes
        (12 8) no
        (48 2) 48 / 2 = 24 yes
        
        Most importantly, do not give up, all the numbers that will be given has indeed a solution.
~~~~~~~~~~~~~~~~~~~~~~~~~~~~~~~~~~~~~~~~~~~~~~~~~
User:
        14 8 8 2
Assistant:
        Trying a promising first operation:
        1. 8 + 2: (14, 10, 8)
        - 14 + 10: (24, 8) 32, 16, 192, 3
        - 14 - 10: (8, 4) 12, 8, 32, 2
        - 14 * 10: (140, 8) 148, 132, 1120, fractional
        - 14 / 10: (8, 1.4) fractional
        - 14 + 8: (22, 10) 32, 12, 220, fractional
        - 14 - 8: (10, 6) 16, 4, 60, fractional
        - 14 * 8: (112, 10) 122, 102, 1120, fractional
        - 14 / 8: fractional
        - 10 + 8: (18, 14) 32, 4, 252, fractional
        - 10 - 8: (14, 2) 16, 12, 28, 7
        - 10 * 8: (80, 14) 94, 66, big, fractional
        - 10 / 8: fractional
        
        Trying another promising first operation:
        2. 8 / 2: (14, 8, 4)
        - 14 + 8: (22, 4) 26, 18, 88, fractional
        - 14 - 8: (6, 4) 10, 2, 24 = 6 * 4 -> found it!
        Backtracking the solution:
        Step 1:
        8 / 2 = 4
        Step 2:
        14 - 8 = 6
        Step 3:
        6 * 4 = 24
        Considering these steps: 24 = 6 * 4 = (14 - 8) * 4 = (14 - 8) * (8 / 2) = 24.
        answer: (14 - 8) * (8 / 2) = 24.
~~~~~~~~~~~~~~~~~~~~~~~~~~~~~~~~~~~~~~~~~~~~~~~~~
User:
        9 5 5 5
Assistant:
        Trying a promising first operation:
        1. 9 - 5: (5, 5, 4)
        - 5 + 5: (10, 4) 14, 6, 40, fractional
        - 5 - 5: (4, 0) 4, 4, 0, undefined
        - 5 * 5: (25, 4) 29, 21, 100, fractional
        - 5 / 5: (4, 1) 5, 3, 4, 4
        - 5 + 4: (9, 5) 14, 4, 45, fractional
        - 5 - 4: (5, 1) 6, 4, 5, 0.2
        - 5 * 4: (20, 5) 25, 15, 100, fractional
        - 5 / 4: fractional
        
        Trying another promising first operation:
        2. 5 + 5: (10, 9, 5)
        - 10 + 9: (19, 5) 24 = 19 + 5 -> found it!
        Backtracking the solution:
        Step 1:
        5 + 5 = 10
        Step 2:
        10 + 9 = 19
        Step 3:
        19 + 5 = 24
        Considering these steps: 24 = 19 + 5 = (10 + 9) + 5 = ((5 + 5) + 9) + 5 = 24.
        answer: ((5 + 5) + 9) + 5 = 24.
~~~~~~~~~~~~~~~~~~~~~~~~~~~~~~~~~~~~~~~~~~~~~~~~~
User:
        8 6 4 4
Assistant:
        Trying a promising first operation:
        1. 8 - 6: (4, 4, 2)
        - 4 + 4: (8, 2) 10, 6, 16, 4
        - 4 - 4: (2, 0) 2, 2, 0, undefined
        - 4 * 4: (16, 2) 18, 14, 32, 8
        - 4 / 4: (2, 1) 3, 1, 2, 2
        - 4 + 2: (6, 4) 10, 2, 24 = 6 * 4 -> found it!
        Backtracking the solution:
        Step 1:
        8 - 6 = 2
        Step 2:
        4 + 2 = 6
        Step 3:
        6 * 4 = 24
        Considering these steps: 24 = 6 * 4 = (4 + 2) * 4 = (4 + (8 - 6)) * 4 = 24.
        answer: (4 + (8 - 6)) * 4 = 24.
~~~~~~~~~~~~~~~~~~~~~~~~~~~~~~~~~~~~~~~~~~~~~~~~~
User:
        13 10 9 4
Assistant:
        Trying a promising first operation:
        1. 13 - 4: (10, 9, 9)
        - 10 + 9: (19, 9) 28, 10, 171, fractional
        - 10 - 9: (9, 1) 10, 8, 9, 9
        - 10 * 9: (90, 9) 99, 81, 810, fractional
        - 10 / 9: fractional
        - 9 + 9: (18, 10) 28, 8, 180, fractional
        - 9 - 9: (10, 0) 10, 10, 0, undefined
        - 9 * 9: (81, 10) 91, 71, 810, fractional
        - 9 / 9: (10, 1) 11, 9, 10, 10
        
        Trying another promising first operation:
        2. 10 - 4: (13, 9, 6)
        - 13 + 9: (22, 6) 28, 16, 132, fractional
        - 13 - 9: (6, 4) 10, 2, 24 = 6 * 4 -> found it!
        Backtracking the solution:
        Step 1:
        10 - 4 = 6
        Step 2:
        13 - 9 = 4
        Step 3:
        6 * 4 = 24
        Considering these steps: 24 = 6 * 4 = 6 * (13 - 9) = (10 - 4) * (13 - 9) = 24.
        answer: (10 - 4) * (13 - 9) = 24.
~~~~~~~~~~~~~~~~~~~~~~~~~~~~~~~~~~~~~~~~~~~~~~~~~
User:
        8 8 5 4
Assistant:
        Trying a promising first operation:
        1. 8 - 4: (8, 5, 4)
        - 8 + 5: (13, 4) 17, 9, 52, fractional
        - 8 - 5: (4, 3) 7, 1, 12, fractional
        - 8 * 5: (40, 4) 44, 36, 160, fractional
        - 8 / 5: fractional
        - 8 + 4: (12, 5) 17, 7, 60, fractional
        - 8 - 4: (5, 4) 9, 1, 20, fractional
        - 8 * 4: (32, 5) 37, 27, 160, fractional
        - 8 / 4: (5, 2) 7, 3, 10, 2.5
        - 5 + 4: (9, 8) 17, 1, 72, fractional
        - 5 - 4: (8, 1) 9, 7, 8, 8
        - 5 * 4: (20, 8) 28, 12, 160, fractional
        - 5 / 4: fractional
        
        Trying another promising first operation:
        2. 8 / 4: (8, 5, 2)
        - 8 + 5: (13, 2) 15, 11, 26, fractional
        - 8 - 5: (3, 2) 5, 1, 6, 1.5
        - 8 * 5: (40, 2) 42, 38, 80, 20
        - 8 / 5: fractional
        - 8 + 2: (10, 5) 15, 5, 50, 2
        - 8 - 2: (6, 5) 11, 1, 30, fractional
        - 8 * 2: (16, 5) 21, 11, 80, fractional
        - 8 / 2: (5, 4) 9, 1, 20, fractional
        - 5 + 2: (7, 8) 15, 1, 56, fractional
        - 5 - 2: (8, 3) 11, 5, 24 = 8 * 3 -> found it!
        Backtracking the solution:
        Step 1:
        8 / 4 = 2
        Step 2:
        5 - 2 = 3
        Step 3:
        8 * 3 = 24
        Considering these steps: 24 = 8 * 3 = 8 * (5 - 2) = 8 * (5 - (8 / 4)) = 24.
        answer: 8 * (5 - (8 / 4)) = 24.
~~~~~~~~~~~~~~~~~~~~~~~~~~~~~~~~~~~~~~~~~~~~~~~~~
User:
        11 11 1 1
Assistant:
        Trying a promising first operation:
        1. 11 - 1: (11, 10, 1)
        - 11 + 10: (21, 1) 22, 20, 21, 21
        - 11 - 10: (1, 1) 2, 0, 1, 1
        - 11 * 10: (110, 1) 111, 109, 110, 110
        - 11 / 10: fractional
        - 11 + 1: (12, 10) 22, 2, 120, 1.2
        - 11 - 1: (10, 10) 20, 0, 100, 1
        - 11 * 1: (11, 10) 21, 1, 110, 1.1
        - 11 / 1: (11, 10) 21, 1, 110, fractional
        - 10 + 1: (11, 11) 22, 0, 121, 1
        - 10 - 1: (11, 9) 20, 2, 99, 1.1
        - 10 * 1: (11, 10) 21, 1, 110, 1.1
        - 10 / 1: (11, 10) 21, 1, 110, 1.1
        
        Trying another promising first operation:
        2. 11 + 1: (12, 11, 1)
        - 12 + 11: (23, 1) 24 = 23 + 1 -> found it!
        Backtracking the solution:
        Step 1:
        11 + 1 = 12
        Step 2:
        12 + 11 = 23
        Step 3:
        23 + 1
        Considering these steps: 24 = 23 + 1 = (12 + 11) + 1 = ((11 + 1) + 11) + 1 = 24.
        answer: ((11 + 1) + 11) + 1 = 24.
~~~~~~~~~~~~~~~~~~~~~~~~~~~~~~~~~~~~~~~~~~~~~~~~~
User:
        11 7 4 1
Assistant:
        Trying a promising first operation:
        1. 7 * 4: (28, 11, 1)
        - 28 + 11: (39, 1) 40, 38, 39, 39
        - 28 - 11: (17, 1) 18, 16, 17, 17
        - 28 * 11: (308, 1) 309, 307, 308, 308
        - 28 / 11: fractional
        - 28 + 1: (29, 11) 40, 18, 319, fractional
        - 28 - 1: (27, 11) 38, 16, 297, fractional
        - 28 * 1: (28, 11) 39, 17, 308, fractional
        - 28 / 1: (28, 11) 39, 17, 308, fractional
        - 11 + 1: (29, 28) 57, 1, 812, fractional
        - 11 - 1: (28, 10) 38, 18, 280, fractional
        - 11 * 1: (28, 11) 39, 17, 308, fractional
        - 11 / 1: (28, 11) 39, 17, 308, fractional
        
        Trying another promising first operation:
        2. 7 + 1: (11 8 4)
        - 11 + 8: (19, 4) 23, 15, 76, fractional
        - 11 - 8: (4, 3) 7, 1, 12, fractional
        - 11 * 8: (88, 4) 92, 84, 352, fractional
        - 11 / 8: fractional
        - 11 + 4: (15, 8) 23, 7, 120, fractional
        - 11 - 4: (7, 8) 15, -1, 56, fractional
        - 11 * 4: (44, 8) 52, 36, 352, fractional
        - 11 / 4: fractional
        - 8 + 4: (12, 11) 23, -1, 132, fractional
        - 8 - 4: (11, 4) 15, 7, 44, fractional
        - 8 * 4: (32, 11) 43, 21, 352, fractional
        - 8 / 4: (11, 2) 13, 9, 22, fractional
        
        Trying another promising first operation:
        3. 4 + 1: (11 7 5)
        - 11 + 7: (18, 5) 23, 13, 90, fractional
        - 11 - 7: (5, 4) 9, 1, 20, fractional
        - 11 * 7: (77, 5) 82, 72, 385, fractional
        - 11 / 7: fractional
        - 11 + 5: (16, 7) 23, 9, 112, fractional
        - 11 - 5: (7, 6) 13, 1, 42, fractional
        - 11 * 5: (55, 7) 62, 48, 385, fractional
        - 11 / 5: fractional
        - 7 + 5: (12, 11) 23, 1, 132, fractional
        - 7 - 5: (11, 2) 13, 9, 22, fractional
        - 7 * 5: (35, 11) 46, 24 = 35 - 11 -> found it!
        Step 1:
        4 + 1 = 5
        Step 2:
        7 * 5 = 35
        Step 3:
        35 - 11 = 24
        Considering these steps: Backtracking the solution:
        24 = 35 - 11 = (7 * 5) - 11 = (7 * (4 + 1)) - 11 = 24.
        answer: (7 * (4 + 1)) - 11 = 24.
~~~~~~~~~~~~~~~~~~~~~~~~~~~~~~~~~~~~~~~~~~~~~~~~~
User:
        11 5 4 3
Assistant:
        Trying a promising first operation:
        1. 5 * 4: (20, 11, 3)
        - 20 + 11: (31, 3) 34, 28, 93, fractional
        - 20 - 11: (9, 3) 12, 6, 27, 3
        - 20 * 11: (220, 3) 223, 217, 660, fractional
        - 20 / 11: fractional
        - 20 + 3: (23, 11) 34, 12, 253, fractional
        - 20 - 3: (17, 11) 28, 6, 187, fractional
        - 20 * 3: (60, 11) 71, 49, 660, fractional
        - 20 / 3: fractional
        - 11 + 3: (14, 20) 34, -6, 280, fractional
        - 11 - 3: (8, 20) 28, -12, 160, fractional
        - 11 * 3: (33, 20) 53, 13, 660, fractional
        - 11 / 3: fractional
        
        Trying another promising first operation:
        2. 11 - 3: (8, 5, 4)
        - 8 + 5: (13, 4) 17, 9, 52, fractional
        - 8 - 5: (4, 3) 7, 1, 12, fractional
        - 8 * 5: (40, 4) 44, 36, 160, fractional
        - 8 / 5: fractional
        - 8 + 4: (12, 5) 17, 7, 60, fractional
        - 8 - 4: (5, 4) 9, 1, 20, fractional
        - 8 * 4: (32, 5) 37, 27, 160, fractional
        - 8 / 4: (5, 2) 7, 3, 10, 2.5
        - 5 + 4: (9, 8) 17, 1, 72, fractional
        - 5 - 4: (8, 1) 9, 7, 8, 8
        - 5 * 4: (20, 8) 28, 12, 160, fractional
        - 5 / 4: fractional
        
        Trying another promising first operation:
        3. 11 * 3: (33, 5, 4)
        - 33 + 5: (38, 4) 42, 34, 152, fractional
        - 33 - 5: (28, 4) 32, 24 = 28 - 4 -> found it!
        Backtracking the solution:
        Step 1:
        11 * 3 = 33
        Step 2:
        33 - 5 = 28
        Step 3:
        28 - 4 = 24
        Considering these steps: 24 = 28 - 4 = (33 - 5) - 4 = ((11 * 3) - 5) - 4 = 24.
        answer: ((11 * 3) - 5) - 4 = 24.
~~~~~~~~~~~~~~~~~~~~~~~~~~~~~~~~~~~~~~~~~~~~~~~~~
User:
        13 12 5 2
Assistant:
        Trying a promising first operation:
        1. 12 + 2: (14, 13, 5)
        - 14 + 13: (27, 5) 32, 22, 135, fractional
        - 14 - 13: (5, 1) 6, 4, 5, 5
        - 14 * 13: (182, 5) 187, 177, 910, fractional
        - 14 / 13: fractional
        - 14 + 5: (19, 13) 32, 6, 247, fractional
        - 14 - 5: (13, 9) 22, 4, 117, fractional
        - 14 * 5: (70, 13) 83, 57, 910, fractional
        - 14 / 5: fractional
        - 13 + 5: (18, 14) 32, 4, 252, fractional
        - 13 - 5: (14, 8) 22, 6, 112, fractional
        - 13 * 5: (65, 14) 79, 51, 910, fractional
        - 13 / 5: fractional
        
        Trying another promising first operation:
        2. 13 + 5: (18, 12, 2)
        - 18 + 12: (30, 2) 32, 28, 60, fractional
        - 18 - 12: (6, 2) 8, 4, 12, 3
        - 18 * 12: (216, 2) 218, 214, 432, fractional
        - 18 / 12: fractional
        - 18 + 2: (20, 12) 32, 8, 240, fractional
        - 18 - 2: (16, 12) 28, 4, 192, fractional
        - 18 * 2: (36, 12) 48, 24 = 36 - 12 -> found it!
        Backtracking the solution:
        Step 1:
        13 + 5 = 18
        Step 2:
        18 * 2 = 36
        Step 3:
        36 - 12 = 24
        Considering these steps: 24 = 36 - 12 = (18 * 2) - 12 = ((13 + 5) * 2) - 12 = 24.
        answer: ((13 + 5) * 2) - 12 = 24.
~~~~~~~~~~~~~~~~~~~~~~~~~~~~~~~~~~~~~~~~~~~~~~~~~
User:
        9 8 2 1
Assistant:
        Trying a promising first operation:
        1. 9 - 1: (8, 8, 2)
        - 8 + 8: (16, 2) 18, 14, 32, 8
        - 8 - 8: (2, 0) 2, 2, 0, undefined
        - 8 * 8: (64, 2) 66, 62, 128, 32
        - 8 / 8: (2, 1) 3, 1, 2, 2
        - 8 + 2: (10, 8) 18, 2, 80, 4
        - 8 - 2: (6, 8) 14, -2, 48, fractional
        - 8 * 2: (16, 8) 24 = 16 + 8 -> found it!
        Backtracking the solution:
        Step 1:
        9 - 1 = 8
        Step 2:
        8 * 2 = 16
        Step 3:
        16 + 8 = 24
        Considering these steps: 24 = 16 + 8 = (8 * 2) + 8 = ((9 - 1) * 2) + 8 = 24.
        answer: ((9 - 1) * 2) + 8 = 24.
~~~~~~~~~~~~~~~~~~~~~~~~~~~~~~~~~~~~~~~~~~~~~~~~~
\end{lstlisting}

\subsubsection{AoT (Long)}
\leavevmode\begin{lstlisting}[numbers=none]
System:
        Use numbers and basic arithmetic operations (+ - * /) to obtain 24. When considering the next steps, do not choose operations that will result in a negative or fractional number. In order to help with the calculations, the numbers in the parenthesis represent the numbers that are left after the operations and they are in descending order.
        
        Another thing we do is when there are only two numbers left in the parenthesis, we check whether we can arrive at 24 only by using basic arithmetic operations (+ - * /). Some examples regarding this idea:
        (21 2) no
        since 21 + 2 = 23, 21 - 2 = 19, 21 * 2 = 42, 21 / 2 = 10.5, none of which is equal to 24.
        (30 6) 30 - 6 = 24 yes
        (8 3) 8 * 3 = 24 yes
        (12 8) no
        (48 2) 48 / 2 = 24 yes

        Most importantly, do not give up, all the numbers that will be given has indeed a solution.
User:
        14 8 8 2
Assistant:
        Trying a promising first operation:
        1. 8 + 2: (14, 10, 8)
        - 14 + 10: (24, 8) 32, 16, 192, 3
        - 14 - 10: (8, 4) 12, 8, 32, 2
        - 14 * 10: (140, 8) 148, 132, 1120, fractional
        - 14 / 10: (8, 1.4) fractional
        - 14 + 8: (22, 10) 32, 12, 220, fractional
        - 14 - 8: (10, 6) 16, 4, 60, fractional
        - 14 * 8: (112, 10) 122, 102, 1120, fractional
        - 14 / 8: fractional
        - 10 + 8: (18, 14) 32, 4, 252, fractional
        - 10 - 8: (14, 2) 16, 12, 28, 7
        - 10 * 8: (80, 14) 94, 66, big, fractional
        - 10 / 8: fractional
        
        Trying another promising first operation:
        2. 14 + 8: (22, 8, 2)
        - 22 + 8: (30, 2) 32, 28, 60, 15
        - 22 - 8: (14, 2) 16, 12, 28, 7
        - 22 * 8: (176, 2) 178, 174, 88
        - 22 / 8: (2.75, 2) fractional
        - 22 + 2: (24, 8) 32, 16, 192, 3
        - 22 - 2: (20, 8) 28, 12, 160, fractional
        - 22 * 2: (44, 8) 52, 36, 352, fractional
        - 22 / 2: (11, 8) 19, 3, 88, fractional
        - 8 + 2: (22, 10) 32, 12, 220, fractional
        - 8 - 2: (22, 6) 28, 16, 132, fractional
        - 8 * 2: (22, 16) 38, 6, 352, fractional
        - 8 / 2: (22, 4) 26, 18, 88, fractional
        
        Trying another promising first operation:
        3. 14 + 2: (16, 8, 8)
        - 16 + 8: (24, 8) 32, 16, 192, 3
        - 16 - 8: (8, 8) 16, 0, 64, 1
        - 16 * 8: (128, 8) 136, 120, 1024, 16
        - 16 / 8: (8, 2) 10, 6, 16, 4
        - 8 + 8: (16, 16 32, 0, 256, 1
        - 8 - 8: (16, 0) 16, 16, 0, undefined
        - 8 * 8: (64, 16) 80, 48, 1024, 4
        - 8 / 8: (16, 1) 17, 15, 16, 16
        
        Trying another promising first operation:
        4. 8 - 2: (14, 8, 6)
        - 14 + 8: (22, 14) 36, 8, 308, fractional
        - 14 - 8: (6, 6) 12, 0, 36, 1
        - 14 * 8: (112, 6) 118, 106, 672, fractional
        - 14 / 8: (6, 1.75) fractional
        - 14 + 6: (20, 8) 22, 12, 160, fractional
        - 14 - 6: (8, 8) 16, 0, 64, 1
        - 14 * 6: (84, 8) 92, 76, 672, fractional
        - 14 / 6: (8, 2.3) fractional
        - 8 + 6: (14, 14) 28, 0, 196, 1
        - 8 - 6: (14, 2) 16, 12, 28, 7
        - 8 * 6: (48, 14) 62, 34, 672, fractional
        - 8 / 6: (14, 1.3) fractional
        
        Trying another promising first operation:
        5. 8 * 2: (16, 14, 8)
        - 16 + 14: (30, 8) 38, 22, 240, fractional
        - 16 - 14: (8, 2) 10, 6, 16, 4
        - 16 * 14: (224, 8) 232, 216, 1792, 28
        - 16 / 14: (8, 1.1) fractional
        - 16 + 8: (24, 14) 38, 10, 336, fractional
        - 16 - 8: (14, 8) 22, 6, 112, fractional
        - 16 * 8: (128, 14) 142, 112, 1792, fractional
        - 16 / 8: (14, 2) 16, 12, 28, 7
        - 14 + 8: (22, 16) 38, 6, 352, fractional
        - 14 - 8: (16, 6) 22, 10, 96, fractional
        - 14 * 8: (112, 16) 128, 96, 1792, 7
        - 14 / 8: (16, 1.7) fractional
        
        Trying another promising first operation:
        6. 14 * 2: (28, 8, 8)
        - 28 + 8: (36, 8) 44, 28, 288, fractional
        - 28 - 8: (20, 8) 28, 12, 160, fractional
        - 28 * 8: (224, 8) 232, 216, 1792, 28
        - 28 / 8: (8, 3.5) fractional, fractional, 28, fractional
        - 8 + 8: (16, 16 32, 0, 256, 1
        - 8 - 8: (16, 0) 16, 16, 0, undefined
        - 8 * 8: (64, 16) 80, 48, 1024, 4
        - 8 / 8: (16, 1) 17, 15, 16, 16
        
        Trying another promising first operation:
        7. 8 / 2: (14, 8, 4)
        - 14 + 8: (22, 4) 26, 18, 88, fractional
        - 14 - 8: (6, 4) 10, 2, 24 = 6 * 4 -> found it!
        Backtracking the solution:
        Step 1:
        8 / 2 = 4
        Step 2:
        14 - 8 = 6
        Step 3:
        6 * 4 = 24
        Considering these steps: 24 = 6 * 4 = (14 - 8) * 4 = (14 - 8) * (8 / 2) = 24.
        answer: (14 - 8) * (8 / 2) = 24.
~~~~~~~~~~~~~~~~~~~~~~~~~~~~~~~~~~~~~~~~~~~~~~~~~
User:
        9 5 5 5
Assistant:
        Trying a promising first operation:
        1. 9 - 5: (5, 5, 4)
        - 5 + 5: (10, 4) 14, 6, 40, fractional
        - 5 - 5: (4, 0) 4, 4, 0, undefined
        - 5 * 5: (25, 4) 29, 21, 100, fractional
        - 5 / 5: (4, 1) 5, 3, 4, 4
        - 5 + 4: (9, 5) 14, 4, 45, fractional
        - 5 - 4: (5, 1) 6, 4, 5, 0.2
        - 5 * 4: (20, 5) 25, 15, 100, fractional
        - 5 / 4: fractional
        
        Trying another promising first operation:
        2. 5 * 5: (25, 9, 5)
        - 25 + 9: (34, 5) 39, 29, 170, fractional
        - 25 - 9: (16, 5) 21, 11, 80, fractional
        - 25 * 9: (225, 5) 230, 220, 1125, 45
        - 25 / 9: (5, 2.7) fractional
        - 25 + 5: (30, 9) 39, 21, 270, fractional
        - 25 - 5: (20, 9) 29, 11, 180, fractional
        - 25 * 5: (75, 9) 84, 66, 675, fractional
        - 25 / 5: (9, 5) 14, 4, 45, fractional
        - 9 + 5: (25, 14) 39, 11, 350, fractional
        - 9 - 5: (25, 4) 29, 21, 100, fractional
        - 9 * 5: (45, 25) 70, 20, 1125, fractional
        - 9 / 5: (25, 1.8) fractional, fractional, 45, fractional
        
        Trying another promising first operation:
        3. 5 - 5: (9, 5, 0)
        - 9 + 5: (25, 14) 39, 11, 350, fractional
        - 9 - 5: (25, 4) 29, 21, 100, fractional
        - 9 * 5: (45, 25) 70, 20, 1125, fractional
        - 9 / 5: (25, 1.8) fractional, fractional, 45, fractional
        - 9 + 0: (9, 5) 14, 4, 45, fractional
        - 9 - 0: (9, 5) 14, 4, 45, fractional
        - 9 * 0: (5, 0) 5, 5, 0, undefined
        - 9 / 0: undefined
        - 5 + 0: (9, 5) 14, 4, 45, fractional
        - 5 - 0: (9, 5) 14, 4, 45, fractional
        - 5 * 0: (9, 0) 9, 9, 0, undefined
        - 5 / 0: undefined
        
        Trying another promising first operation:
        4. 5 / 5: (9, 5, 1)
        - 9 + 5: (25, 14) 39, 11, 350, fractional
        - 9 - 5: (25, 4) 29, 21, 100, fractional
        - 9 * 5: (45, 25) 70, 20, 1125, fractional
        - 9 / 5: (25, 1.8) fractional, fractional, 45, fractional
        - 9 + 1: (10, 5) 15, 5, 50, 2
        - 9 - 1: (8, 5) 13, 3, 40, fractional
        - 9 * 1: (9, 5) 14, 4, 45, fractional
        - 9 / 1: (9, 5) 14, 4, 45, fractional
        - 5 + 1: (9, 6) 15, 3, 54, fractional
        - 5 - 1: (9, 4) 13, 5, 36, fractional
        - 5 * 1: (9, 5) 14, 4, 45, fractional
        - 5 / 1: (9, 5) 14, 4, 45, fractional
        
        Trying another promising first operation:
        5. 9 * 5: (45, 5, 5)
        - 45 + 5: (50, 5) 55, 45, 250, 10
        - 45 - 5: (40, 5) 45, 35, 200, 8
        - 45 * 5: (225, 5) 230, 220, 1125, 45
        - 45 / 5: (9, 5) 14, 4, 45, fractional
        - 5 + 5: (10, 4) 14, 6, 40, fractional
        - 5 - 5: (4, 0) 4, 4, 0, undefined
        - 5 * 5: (25, 4) 29, 21, 100, fractional
        - 5 / 5: (4, 1) 5, 3, 4, 4
        
        Trying another promising first operation:
        6. 5 + 5: (10, 9, 5)
        - 10 + 9: (19, 5) 24 = 19 + 5 -> found it!
        Backtracking the solution:
        Step 1:
        5 + 5 = 10
        Step 2:
        10 + 9 = 19
        Step 3:
        19 + 5 = 24
        Considering these steps: 24 = 19 + 5 = (10 + 9) + 5 = ((5 + 5) + 9) + 5 = 24.
        answer: ((5 + 5) + 9) + 5 = 24.
~~~~~~~~~~~~~~~~~~~~~~~~~~~~~~~~~~~~~~~~~~~~~~~~~
User:
        8 6 4 4
Assistant:
        Trying a promising first operation:
        1. 8 * 6: (48, 4, 4)
        - 48 + 4: (52, 4) 56, 48, 208, 13
        - 48 - 4: (44, 4) 48, 40, 176, 11
        - 48 * 4: (192, 4) 196, 188, 768, 48
        - 48 / 4: (12, 4) 16, 8, 48, 3
        - 4 + 4: (48, 8) 56, 40, 384, 6
        - 4 - 4: (48, 0) 48, 48, 0, undefined
        - 4 * 4: (48, 16) 64, 32, 768, 3
        - 4 / 4: (48, 1) 49, 47, 48, 48
        
        Trying another promising first operation:
        2. 4 - 4: (8, 6, 0)
        - 8 + 6: (14, 0) 14, 14, 0, undefined
        - 8 - 6: (2, 0) 2, 2, 0, undefined
        - 8 * 6: (48, 0) 48, 48, 0, undefined
        - 8 / 6: (1.3, 0) fractional
        - 8 + 0: (8, 6) 14, 2, 48, fractional
        - 8 - 0: (8, 6) 14, 2, 48, fractional
        - 8 * 0: (6, 0) 6, 6, 0, undefined
        - 8 / 0: undefined
        - 6 + 0: (8, 6) 14, 2, 48, fractional
        - 6 - 0: (8, 6) 14, 2, 48, fractional
        - 6 * 0: (8, 0) 8, 8, 0, undefined
        - 6 / 0: undefined
        
        Trying another promising first operation:
        3. 4 / 4: (8, 6, 1)
        - 8 + 6: (14, 1) 15, 13, 14, 14
        - 8 - 6: (2, 1) 3, 1, 2, 2
        - 8 * 6: (48, 1) 49, 47, 48, 48
        - 8 / 6: (1.3, 1) fractional
        - 8 + 1: (9, 6) 15, 3, 54, fractional
        - 8 - 1: (7, 6) 13, 1, 42, fractional
        - 8 * 1: (8, 6) 14, 2, 48, fractional
        - 8 / 1: (8, 6) 14, 2, 48, fractional
        - 6 + 1: (8, 7) 15, 1, 56, fractional
        - 6 - 1: (8, 5) 13, 3, 40, fractional
        - 6 * 1: (8, 6) 14, 2, 48, fractional
        - 6 / 1: (8, 1) 9, 7, 8, 8
        
        Trying another promising first operation:
        4. 8 - 6: (4, 4, 2)
        - 4 + 4: (8, 2) 10, 6, 16, 4
        - 4 - 4: (2, 0) 2, 2, 0, undefined
        - 4 * 4: (16, 2) 18, 14, 32, 8
        - 4 / 4: (2, 1) 3, 1, 2, 2
        - 4 + 2: (6, 4) 10, 2, 24 = 6 * 4 -> found it!
        Backtracking the solution:
        Step 1:
        8 - 6 = 2
        Step 2:
        4 + 2 = 6
        Step 3:
        6 * 4 = 24
        Considering these steps: 24 = 6 * 4 = (4 + 2) * 4 = (4 + (8 - 6)) * 4 = 24.
        answer: (4 + (8 - 6)) * 4 = 24.
~~~~~~~~~~~~~~~~~~~~~~~~~~~~~~~~~~~~~~~~~~~~~~~~~
User:
        13 10 9 4
Assistant:
        Trying a promising first operation:
        1. 13 - 4: (10, 9, 9)
        - 10 + 9: (19, 9) 28, 10, 171, fractional
        - 10 - 9: (9, 1) 10, 8, 9, 9
        - 10 * 9: (90, 9) 99, 81, 810, fractional
        - 10 / 9: fractional
        - 9 + 9: (18, 10) 28, 8, 180, fractional
        - 9 - 9: (10, 0) 10, 10, 0, undefined
        - 9 * 9: (81, 10) 91, 71, 810, fractional
        - 9 / 9: (10, 1) 11, 9, 10, 10
        Trying another promising first operation:
        2. 13 / 10: (9, 4, 1.3)
        - 9 + 4: (13, 1.3) fractional, fractional, fractional, 10
        - 9 - 4: (5, 1.3) fractional
        - 9 * 4: (36, 1.3) fractional
        - 9 / 4: (2.3, 1.3) fractional, 1, fractional, fractional
        - 9 + 1.3: (10.3, 4) fractional
        - 9 - 1.3: (7.7, 4) fractional
        - 9 * 1.3: (11.7, 4) fractional
        - 9 / 1.3: (6.9, 4) fractional
        - 4 + 1.3: (9, 5.3) fractional
        - 4 - 1.3: (9, 2.7) fractional
        - 4 * 1.3: (9, 5.2) fractional
        - 4 / 1.3: (9, 3.1) fractional
        Trying another promising first operation:
        3. 9 / 4: (13, 10, 2.3)
        - 13 + 10: (23, 2.3) fractional, fractional, fractional, 10
        - 13 - 10: (3, 2.3) fractional
        - 13 * 10: (130, 2.3) fractional
        - 13 / 10: (2.3, 1.3) fractional, 1, fractional, fractional
        - 13 + 2.3: (15.3, 10) fractional, fractional, 153, fractional
        - 13 - 2.3: (11.7, 10) fractional, fractional, 117, fractional
        - 13 * 2.3: (29.9, 10) fractional, fractional, 299, fractional
        - 13 / 2.3: (10, 5.6) fractional, fractional, 560, fractional
        - 10 + 2.3: (13, 12.3) fractional
        - 10 - 2.3: (13, 7.7) fractional
        - 10 * 2.3: (23, 13) 36, 10, 299, fractional
        - 10 / 2.3: (13, 4.3) fractional
        Trying another promising first operation:
        4. 13 / 4: (10, 9, 3.3)
        - 10 + 9: (19, 3.3) fractional
        - 10 - 9: (3.3, 1) fractional
        - 10 * 9: (90, 3.3) fractional
        - 10 / 9: (3.3, 1.1) fractional, fractional, fractional, 3
        - 10 + 3.3: (13.3, 9) fractional
        - 10 - 3.3: (9, 6.7) fractional
        - 10 * 3.3: (33, 9) 42, 24, 297, fractional
        - 10 / 3.3: (3.1, 9) fractional
        - 9 + 3.3: (12.3, 10) fractional, fractional, 123, fractional
        - 9 - 3.3: (10, 5.7) fractional, fractional, 57, fractional
        - 9 * 3.3: (29.7, 10) fractional, fractional, 297, fractional
        - 9 / 3.3: (10, 2.7) fractional, fractional, 27, fractional
        Trying another promising first operation:
        5. 13 / 9: (10, 9, 1.4)
        - 10 + 9: (19, 1.4) fractional
        - 10 - 9: (1.4, 1) fractional
        - 10 * 9: (90, 1.4) fractional, fractional, 130, fractional
        - 10 / 9: (1.1, 1.4) fractional
        - 10 + 1.4: (11.4, 9) fractional
        - 10 - 1.4: (9, 8.6) fractional
        - 10 * 1.4: (140, 9) 149, 131, 1260, fractional
        - 10 / 1.4: (9, 7.1) fractional
        - 9 + 1.4: (10.4, 10) fractional, fractional, 104, fractional
        - 9 - 1.4: (10, 7.6) fractional, fractional, 76, fractional
        - 9 * 1.4: (12.6, 10) fractional, fractional, 126, fractional
        - 9 / 1.4: (10, 6.4) fractional, fractional, 64, fractional
        Trying another promising first operation:
        6. 10 - 4: (13, 9, 6)
        - 13 + 9: (22, 6) 28, 16, 132, fractional
        - 13 - 9: (6, 4) 10, 2, 24 = 6 * 4 -> found it!
        Backtracking the solution:
        Step 1:
        10 - 4 = 6
        Step 2:
        13 - 9 = 4
        Step 3:
        6 * 4 = 24
        Considering these steps: 24 = 6 * 4 = 6 * (13 - 9) = (10 - 4) * (13 - 9) = 24.
        answer: (10 - 4) * (13 - 9) = 24.
~~~~~~~~~~~~~~~~~~~~~~~~~~~~~~~~~~~~~~~~~~~~~~~~~
\end{lstlisting}

\subsubsection{AoT (Random)}
\leavevmode\begin{lstlisting}[numbers=none]
System:
        Use numbers and basic arithmetic operations (+ - * /) to obtain 24. Each step, you are only allowed to choose two of the remaining numbers to obtain a new number.
~~~~~~~~~~~~~~~~~~~~~~~~~~~~~~~~~~~~~~~~~~~~~~~~~
User:
        4 4 6 8.
Assistant:
        Trying a promising first operation:
        (4 + 4) * 6 - 8 = 40.
        
        Trying another promising first operation:
        (8 - 6) * 4 + 4 = 12.
        
        Trying another promising first operation:
        (6 - 4) * (4 + 8) = 24.
        answer: (6 - 4) * (4 + 8) = 24.
~~~~~~~~~~~~~~~~~~~~~~~~~~~~~~~~~~~~~~~~~~~~~~~~~
User:
        2 9 10 12.
Assistant:
        Trying a promising first operation:
        ((12 - 10) / 2) + 9 = 10.
        
        Trying another promising first operation:
        (10 + 9) + (12 / 2) = 25.
        
        Trying another promising first operation:
        (12 * 2) * (10 - 9) = 24.
        answer: (12 * 2) * (10 - 9) = 24.
~~~~~~~~~~~~~~~~~~~~~~~~~~~~~~~~~~~~~~~~~~~~~~~~~
User:
        4 9 10 13.
Assistant:
        Trying a promising first operation:
        ((13 - 10) / 4) + 9 = 10.
        
        Trying another promising first operation:
        4 * (9 - (13 - 10)) = 24.
        answer: 4 * (9 - (13 - 10)) = 24.
~~~~~~~~~~~~~~~~~~~~~~~~~~~~~~~~~~~~~~~~~~~~~~~~~
User:
        1 4 8 8.
Assistant:
        Trying a promising first operation:
        (8 + 1) + 4 * 8 + 32.
        
        Trying another promising first operation:
        (1 + 8 / 4) * 8 = 24.
        answer: (1 + 8 / 4) * 8 = 24.
~~~~~~~~~~~~~~~~~~~~~~~~~~~~~~~~~~~~~~~~~~~~~~~~~
User:
        5 5 5 9.
Assistant:
        Trying a promising first operation:
        ((5 + 5) + 5) + 9 = 24.
        answer: ((5 + 5) + 5) + 9 = 24.
~~~~~~~~~~~~~~~~~~~~~~~~~~~~~~~~~~~~~~~~~~~~~~~~~
\end{lstlisting}

\subsubsection{AoT (BFS)}
\leavevmode\begin{lstlisting}[numbers=none]
System:
        Use numbers and basic arithmetic operations (+ - * /) to obtain 24. When considering the next steps, do not choose operations that will result in a negative or fractional number. In order to help with the calculations, the numbers in the parenthesis represent the numbers that are left after the operations and they are in descending order.
        
        Another thing we do is when there are only two numbers left in the parenthesis, we check whether we can arrive at 24 only by using basic arithmetic operations (+ - * /). Some examples regarding this idea:
        (21, 2): 23, 19, 42, fractional
        (30, 6): 36, 24 = 30 - 6 -> found it!
        (8, 3): 11, 5, 24 = 8 * 3 yes
        (12, 8): 20, 4, 72, fractional
        
        In the second steps, when there are three numbers left, we choose the most promising operations so that when we are left with two numbers, we will be able to get to 24.
        
        So, when we reach 24 in the third steps, that means we found the solution, we directly backtrack to write the answer.
        
        In each step, we want to choose the best operations so that the numbers left, will have a good chance to reach 24.
User:
        14 8 8 2
Assistant:
        Let's consider the most promising possible first steps:
        1. 8 + 8: (16, 14, 2)
        2. 14 - 8: (8, 6, 2)
        3. 14 + 2: (16, 8, 8)
        4. 8 / 2: (14, 8, 4)
        
        Let's consider the most promising second steps:
        1. (16, 14, 2)
        - 16 - 14: (2, 2)
        - 16 / 2: (8, 14)
        - 14 * 2: (28, 16)
        - 14 / 2: (7, 16)
        
        2. (8, 6, 2)
        - 8 - 6: (2, 2)
        - 8 / 2: (4, 6)
        - 6 * 2: (12, 8)
        - 6 / 2: (3, 8)
        
        3. (16, 8, 8)
        - 16 - 8: (8, 8)
        - 16 / 8: (2, 8)
        - 8 * 8: (64, 16)
        - 8 / 8: (1, 16)
        
        4. (14, 8, 4)
        - 14 - 8: (6, 4)
        - 14 / 4: (3.5, 8)
        - 8 * 4: (32, 14)
        - 8 / 4: (2, 14)
        
        Let's consider the most promising third steps:
        1. 8 + 8 = 16
        - 16 - 14: (2, 2): 4, 0, 4, 1
        - 16 / 2: (8, 14): 22, 6, 112, fractional
        - 14 * 2: (28, 16): 44, 12, 448, fractional
        - 14 / 2: (7, 16): 23, 9, 112, fractional
        
        2. 14 - 8 = 6
        - 8 - 6: (2, 2): 4, 0, 4, 1
        - 8 / 2: (6, 4): 10, 2, 24 = 6 * 4 -> found it!
        Backtracking the solution:
        Step 1:
        14 - 8 = 6
        Step 2:
        8 / 2 = 4
        Step 3:
        6 * 4 = 24
        Considering these steps: 24 = 6 * 4 = (14 - 8) * (8 / 2) = 24.
        answer: (14 - 8) * (8 / 2) = 24.
~~~~~~~~~~~~~~~~~~~~~~~~~~~~~~~~~~~~~~~~~~~~~~~~~
User:
        9 5 5 5
Assistant:
        Let's consider the most promising first steps:
        1. 9 - 5: (5, 5, 4)
        2. 5 + 5: (9, 10, 5)
        3. 9 + 5: (14, 5, 5)
        4. 5 - 5: (9, 5, 0)
        
        Let's consider the most promising second steps:
        1. (5, 5, 4)
        - 5 + 5: (10, 4)
        - 5 * 4: (20, 5)
        - 5 - 4: (5, 1)
        - 5 / 4: (5, 1.25)
        
        2. (9, 10, 5)
        - 10 + 9: (19, 5)
        - 10 - 5: (9, 5)
        - 10 * 5: (50, 9)
        - 10 / 5: (9, 2)
        
        3. (14, 5, 5)
        - 14 - 5: (9, 5)
        - 14 + 5: (19, 5)
        - 5 * 5: (25, 14)
        - 5 / 5: (14, 1)
        
        4. (9, 5, 0)
        - 9 + 5: (14, 0)
        - 9 - 5: (4, 0)
        - 5 * 0: (9, 0)
        - 5 / 0: (9, infinity)
        
        Let's consider the most promising third steps:
        1. 9 - 5 = 4
        - 5 + 5: (10, 4): 14, 6, 40, fractional
        - 5 * 4: (20, 5): 25, 15, 100, 4
        - 5 - 4: (5, 1): 6, 4, 5, 5
        - 5 / 4: (5, 1.25): 6.25, 3.75, 6.25, 4
        
        2. 5 + 5 = 10
        - 10 + 9: (19, 5): 24 = 19 + 5 -> found it!
        Backtracking the solution:
        Step 1:
        5 + 5 = 10
        Step 2:
        10 + 9 = 19
        Step 3:
        19 + 5 = 24
        Considering these steps: 24 = 19 + 5 = (10 + 9) + 5 = ((5 + 5) + 9) + 5 = 24.
        answer: ((5 + 5) + 9) + 5 = 24.
~~~~~~~~~~~~~~~~~~~~~~~~~~~~~~~~~~~~~~~~~~~~~~~~~
User:
        8 6 4 4
Assistant:
        Let's consider the most promising first steps:
        1. 8 * 6: (48, 4, 4)
        2. 8 + 4: (12, 6, 4)
        3. 8 - 6: (4, 4, 2)
        4. 6 - 4: (8, 4, 2)
        
        Let's consider the most promising second steps:
        1. (48, 4, 4)
        - 4 * 4: (48, 16)
        - 48 / 4: (12, 4)
        - 4 + 4: (48, 8)
        - 48 - 4: (44, 4)
        
        2. (12, 6, 4)
        - 12 + 6: (18, 4)
        - 6 * 4: (24, 12)
        - 6 + 4: (12, 10)
        - 12 / 4: (6, 3)
        
        3. (4, 4, 2)
        - 4 * 4: (16, 2)
        - 4 + 2: (6, 4)
        - 4 + 4: (8, 2)
        - 4 * 2: (8, 4)
        
        4. (8, 4, 2)
        - 8 * 4: (32, 2)
        - 4 * 2: (8, 8)
        - 8 + 4: (12, 2)
        - 8 / 4: (4, 2)
        
        Let's consider the most promising third steps:
        1. 8 * 6 = 48
        - 4 * 4: (48, 16): 64, 32, big, 3
        - 48 / 4: (12, 4): 16, 8, 48, 3
        - 4 + 4: (48, 8): 56, 40, big, 6
        - 48 - 4: (44, 4): 48, 40, big, 11
        
        2. 8 + 4 = 12
        - 12 + 6: (18, 4): 22, 14, 72, fractional
        - 6 * 4: (24, 12): 36, 12, 288, 2
        - 6 + 4: (12, 10): 22, 2, 120, fractional
        - 12 / 4: (6, 3): 9, 3, 18, 2
        
        3. 8 - 6 = 2
        - 4 * 4: (16, 2): 19, 14, 32, 8 
        - 4 + 2: (6, 4): 10, 2, 24 = 6 * 4 -> found it!
        Backtracking the solution:
        Step 1:
        8 - 6 = 2
        Step 2:
        4 + 2 = 6
        Step 3:
        6 * 4 = 24
        Considering these steps: 24 = 6 * 4 = (4 + 2) * 4 = (4 + (8 - 6)) * 4 = 24.
        answer: (4 + (8 - 6)) * 4 = 24.
~~~~~~~~~~~~~~~~~~~~~~~~~~~~~~~~~~~~~~~~~~~~~~~~~
User:
        13 10 9 4
Assistant:
        Let's consider the most promising first steps:
        1. 13 - 4: (10, 9, 9)
        2. 10 - 4: (13, 9, 6)
        3. 13 + 9: (22, 10, 4)
        4. 10 - 9: (13, 4, 1)
        
        Let's consider the most promising second steps:
        1. (10, 9, 9)
        - 10 + 9: (19, 9)
        - 10 - 9: (9, 1)
        - 9 + 9: (18, 10)
        - 9 / 9: (9, 1)
        
        2. (13, 9, 6)
        - 9 + 6: (15, 13)
        - 9 * 6: (54, 13)
        - 13 - 9: (6, 4)
        - 13 - 6: (9, 7)
        
        3. (22, 10, 4)
        - 22 - 10: (12, 4)
        - 22 - 4: (18, 10)
        - 10 * 4: (40, 22)
        - 10 / 4: (22, 5.5)
        
        4. (13, 4, 1)
        - 13 - 4: (9, 1)
        - 13 * 4: (52, 1)
        - 4 - 1: (13, 3)
        - 13 - 1: (12, 4)
        
        Let's consider the most promising third steps:
        1. 13 - 4 = 9
        - 10 + 9: (19, 9): 28, 10, 171, fractional
        - 10 - 9: (9, 1): 10, 8, 9, 9
        - 9 + 9: (18, 10): 28, 8, 180, fractional
        - 9 / 9: (9, 1): 10, 8, 9, 9
        
        2. 10 - 4 = 6
        - 9 + 6: (15, 13): 28, 2, 195, fractional
        - 9 * 6: (54, 13): 67, 41, 702, fractional
        - 13 - 9: (6, 4): 10, 2, 24 = 6 * 4 -> found it!
        Backtracking the solution:
        Step 1:
        13 - 9 = 4
        Step 2:
        10 - 4 = 6
        Step 3:
        6 * 4 = 24
        Considering these steps: 24 = 6 * 4 = (10 - 4) * (13 - 9) = 24.
        answer:  (10 - 4) * (13 - 9) = 24.
~~~~~~~~~~~~~~~~~~~~~~~~~~~~~~~~~~~~~~~~~~~~~~~~~
User:
        8 8 5 4
Assistant:
        Let's consider the most promising first steps:
        1. 5 + 4: (9, 8, 8)
        2. 8 / 4: (8, 5, 2)
        3. 8 - 5: (8, 4, 3)
        4. 8 / 8: (5, 4, 1)
        
        Let's consider the most promising second steps:
        1. (9, 8, 8)
        - 9 + 8: (17, 8)
        - 8 + 8: (16, 9)
        - 9 * 8: (72, 8)
        - 8 / 8: (9, 1)
        
        2. (8, 5, 2)
        - 5 - 2: (8, 3)
        - 8 - 2: (6, 5)
        - 8 + 5: (13, 2)
        - 5 * 2: (10, 8)
        
        3. (8, 4, 3)
        - 8 - 4: (4, 3)
        - 8 - 3: (5, 4)
        - 4 * 3: (12, 8)
        - 4 + 3: (8, 7)
        
        4. (5, 4, 1)
        - 5 + 4: (9, 1)
        - 5 - 4: (1, 1)
        - 4 - 1: (5, 3)
        - 5 + 1: (6, 4)
        
        Let's consider the most promising third steps:
        1. 5 + 4 = 9
        - 9 + 8: (17, 8): 25, 9, 136, fractional
        - 8 + 8: (16, 9): 25, 7, 144, fractional
        - 9 * 8: (72, 8): 80, 64, 576, 9
        - 8 / 8: (9, 1): 10, 8, 9, 9
        
        2. 8 / 4 = 2
        - 5 - 2: (8, 3): 11, 5, 24 = 8 * 3 -> found it!
        Backtracking the solution:
        Step 1:
        8 / 4 = 2
        Step 2:
        5 - 2 = 3
        Step 3:
        8 * 3 = 24
        Considering these steps: 24 = 8 * 3 = 8 * (5 - 2) = 8 * (5 - (8 / 4)) = 24.
        answer: 8 * (5 - (8 / 4)) = 24.
~~~~~~~~~~~~~~~~~~~~~~~~~~~~~~~~~~~~~~~~~~~~~~~~~
User:
        11 11 1 1
Assistant:
        Let's consider the most promising first steps:
        1. 11 - 1: (11, 10, 1)
        2. 11 + 1: (12, 11, 1)
        3. 11 + 11: (22, 1, 1)
        4. 1 + 1: (11, 11, 2)
        
        Let's consider the most promising second steps:
        1. (11, 10, 1)
        - 11 + 10: (21, 1)
        - 10 + 1: (11, 11)
        - 10 / 1: (11, 10)
        - 11 / 1: (11, 10)
        
        2. (12, 11, 1)
        - 12 + 11: (23, 1)
        - 11 + 1: (12, 11)
        - 11 - 1: (12, 10)
        - 12 + 1: (13, 11)
        
        3. (22, 1, 1)
        - 22 + 1: (23, 1)
        - 1 + 1: (22, 2)
        - 22 / 1: (22, 1)
        - 1 / 1: (22, 1)
        
        4. (11, 11, 2)
        - 11 + 11: (22, 2)
        - 11 * 2: (22, 11)
        - 11 + 2: (13, 11)
        - 2 * 11: (22, 11)
        
        Let's consider the most promising third steps:
        1. 11 - 1 = 10
        - 11 + 10: (21, 1): 22, 20, 21, 20
        - 10 + 1: (11, 11): 22, 10, 121, 0
        - 10 / 1: (11, 10): 21, 1, 110, 0
        - 11 / 1: (11, 10): 21, 1, 110, 0
        
        2. 11 + 1 = 12
        - 12 + 11: (23, 1): 24 = 23 + 1 -> found it!
        Backtracking the solution:
        Step 1:
        11 + 1 = 12
        Step 2:
        12 + 11 = 23
        Step 3:
        23 + 1 = 24
        Considering these steps: 24 = 23 + 1 = (12 + 11) + 1 = ((11 + 1) + 11) + 1 = 24.
        answer: ((11 + 1) + 11) + 1 = 24.
~~~~~~~~~~~~~~~~~~~~~~~~~~~~~~~~~~~~~~~~~~~~~~~~~
User:
        11 7 4 1
Assistant:
        Let's consider the most promising first steps:
        1. 7 * 4: (28, 11, 1)
        2. 7 + 1: (11, 8, 4)
        3. 4 + 1: (11, 7, 5)
        4. 11 - 4: (7, 3, 1)
        
        Let's consider the most promising second steps:
        1. (28, 11, 1)
        - 28 - 11: (17, 1)
        - 28 - 1: (27, 11)
        - 11 + 1: (29, 28)
        - 11 - 1: (28, 10)
        
        2. (11, 8, 4)
        - 11 + 8: (19, 4)
        - 8 + 4: (12, 11)
        - 11 - 8: (4, 3)
        - 8 - 4: (7, 11)
        
        3. (11, 7, 5)
        - 11 - 5: (7, 6)
        - 7 - 5: (11, 2)
        - 7 * 5: (35, 11)
        - 11 + 5: (16, 7)
        
        4. (7, 3, 1)
        - 7 - 3: (4, 1)
        - 7 * 3: (21, 1)
        - 3 + 1: (7, 4)
        - 7 - 1: (6, 3)
        
        Let's consider the most promising third steps:
        1. 7 * 4 = 28
        - 28 - 11: (17, 1): 18, 16, 17, 17
        - 28 - 1: (27, 11): 38, 16, 297, 2.45
        - 11 + 1: (29, 28): 57, 1, 812, 1.03
        - 11 - 1: (28, 10): 38, 18, 280, 2.8
        
        2. 7 + 1 = 8
        - 11 + 8: (19, 4): 23, 15, 76, 4.75
        - 8 + 4: (12, 11): 23, 7, 132, 3
        - 11 - 8: (4, 3): 7, 1, 12, 1.33
        - 8 - 4: (7, 11): 18, 4, 77, 1.75
        
        3. 4 + 1 = 5
        - 11 - 5: (7, 6): 13, 1, 42, 1.17
        - 7 - 5: (11, 2): 13, 9, 22, 5.5
        - 7 * 5: (35, 11): 46, 24 = 35 - 11 -> found it!
        Backtracking the solution:
        Step 1:
        4 + 1 = 5
        Step 2:
        7 * 5 = 35
        Step 3:
        35 - 11 = 24
        Considering these steps: 24 = 35 - 11 = (7 * 5) - 11 = (7 * (4 + 1)) - 11 = 24.
        answer: (7 * (4 + 1)) - 11 = 24.
        ~~~~~~~~~~~~~~~~~~~~~~~~~~~~~~~~~~~~~~~~~~~~~~~~~
User:
        11 5 4 3
Assistant:
        Let's consider the most promising first steps to reach 24:
        1. 11 * 3: (33, 5, 4)
        2. 5 + 4: (11, 9, 3)
        3. 11 - 4: (7, 5, 3)
        4. 4 + 3: (11, 7, 5)
        
        Let's consider the most promising second steps to reach 24:
        1. (33, 5, 4)
        - 33 - 5: (28, 4)
        - 33 - 4: (29, 5)
        - 5 + 4: (33, 9)
        - 5 * 4: (20, 33)
        
        2. (11, 9, 3)
        - 11 * 3: (33, 9)
        - 11 + 3: (14, 9)
        - 9 + 3: (12, 11)
        - 9 / 3: (11, 3)
        
        3. (7, 5, 3)
        - 7 - 3: (4, 5)
        - 7 + 5: (12, 3)
        - 5 + 3: (8, 7)
        - 5 * 3: (7, 6)
        
        4. (11, 7, 5)
        - 11 + 7: (18, 5)
        - 11 - 5: (6, 7)
        - 7 + 5: (12, 11)
        - 7 - 5: (2, 11)
        
        Let's consider the most promising third steps to reach 24:
        1. 11 * 3 = 33
        - 33 - 5: (28, 4): 32, 24 = 28 - 4 -> found it!
        Backtracking the solution:
        Step 1:
        11 * 3 = 33
        Step 2:
        33 - 5 = 28
        Step 3:
        28 - 4 = 24
        Considering these steps: 24 = 28 - 4 = (33 - 5) - 4 = ((11 * 3) - 5) - 4 = 24.
        answer: ((11 * 3) - 5) - 4 = 24.
~~~~~~~~~~~~~~~~~~~~~~~~~~~~~~~~~~~~~~~~~~~~~~~~~
User:
        13 12 5 2
Assistant:
        Let's consider the most promising first steps to reach 24:
        1. 12 / 2: (13, 6, 5)
        2. 13 + 5: (18, 12, 2)
        3. 12 - 5: (13, 7, 2)
        4. 5 - 2: (13, 12, 3)
        
        Let's consider the most promising second steps to reach 24:
        1. (13, 6, 5)
        - 13 + 6: (19, 5)
        - 13 - 5: (8, 6)
        - 6 * 5: (30, 13)
        - 6 - 5: (13, 1)
        
        2. (18, 12, 2)
        - 18 * 2: (36, 12)
        - 12 / 2: (18, 6)
        - 18 - 2: (16, 12)
        - 12 * 2: (24, 18)
        
        3. (13, 7, 2)
        - 13 + 7: (20, 2)
        - 13 - 2: (11, 7)
        - 7 * 2: (14, 13)
        - 7 - 2: (13, 5)
        
        4. (13, 12, 3)
        - 13 * 3: (39, 12)
        - 13 - 3: (10, 12)
        - 12 * 3: (36, 13)
        - 12 / 3: (13, 4)
        
        Let's consider the most promising third steps to reach 24:
        1. 12 / 2 = 6
        - 13 + 6: (19, 5): 24 = 19 + 5 -> found it!
        Backtracking the solution:
        Step 1:
        12 / 2 = 6
        Step 2:
        13 + 6 = 19
        Step 3:
        19 + 5 = 24
        Considering these steps: 24 = 19 + 5 = (13 + 6) + 5 = (13 + (12 / 2)) + 5 = 24.
        answer: (13 + (12 / 2)) + 5 = 24.
        ~~~~~~~~~~~~~~~~~~~~~~~~~~~~~~~~~~~~~~~~~~~~~~~~~
User:
        9 8 2 1
Assistant:
        Let's consider the most promising first steps to reach 24:
        1. 9 - 1: (8, 8, 2)
        2. 8 + 2: (10, 9, 1)
        3. 9 + 1: (10, 8, 2)
        4. 2 + 1: (9, 8, 3)
        
        Let's consider the most promising second steps to reach 24:
        1. (8, 8, 2)
        - 8 + 8: (16, 2)
        - 8 * 2: (16, 8)
        - 8 - 2: (6, 8)
        - 8 + 8: (16, 2)
        
        2. (10, 9, 1)
        - 10 + 9: (19, 1)
        - 10 - 1: (9, 9)
        - 9 + 1: (10, 10)
        - 9 / 1: (8, 10)
        
        3. (10, 8, 2)
        - 10 + 8: (18, 2)
        - 10 * 2: (8, 8)
        - 8 + 2: (10, 10)
        - 8 / 2: (6, 10)
        
        4. (9, 8, 3)
        - 9 + 8: (17, 3)
        - 9 / 3: (6, 8)
        - 8 + 3: (11, 9)
        - 8 * 3: (24, 9)
        
        Let's consider the most promising third steps to reach 24:
        1. 9 - 1 = 8
        - 8 + 8: (16, 2): 18, 14, 32, 8
        - 8 * 2: (16, 8): 24 = 16 + 8 -> found it!
        Backtracking the solution:
        Step 1:
        9 - 1 = 8
        Step 2:
        8 * 2 = 16
        Step 3:
        16 + 8 = 24
        Considering these steps: 24 = 16 + 8 = (8 * 2) + 8 = ((9 - 1) * 2) + 8 = 24.
        answer: ((9 - 1) * 2) + 8 = 24.
~~~~~~~~~~~~~~~~~~~~~~~~~~~~~~~~~~~~~~~~~~~~~~~~~
\end{lstlisting}

\subsection{AoT (Short)}
\leavevmode\begin{lstlisting}[numbers=none]
System:
        Use numbers and basic arithmetic operations (+ - * /) to obtain 24. When considering the next steps, do not choose operations that will result in a negative or fractional number. In order to help with the calculations, the numbers in the parenthesis represent the numbers that are left after the operations and they are in descending order.
        
        Another thing we do is when there are only two numbers left in the parenthesis, we check whether we can arrive at 24 only by using basic arithmetic operations (+ - * /). Some examples regarding this idea:
        (21 2) no
        since 21 + 2 = 23, 21 - 2 = 19, 21 * 2 = 42, 21 / 2 = 10.5, none of which is equal to 24.
        (30 6) 30 - 6 = 24 yes
        (8 3) 8 * 3 = 24 yes
        (12 8) no
        (48 2) 48 / 2 = 24 yes
        
        Most importantly, do not give up, all the numbers that will be given has indeed a solution.
User:
        14 8 8 2
Assistant:
        Trying a promising first operation:
        1. 8 / 2: (14, 8, 4)
        - 14 + 8: (22, 4) 26, 18, 88, fractional
        - 14 - 8: (6, 4) 10, 2, 24 = 6 * 4 -> found it!
        Backtracking the solution:
        Step 1:
        8 / 2 = 4
        Step 2:
        14 - 8 = 6
        Step 3:
        6 * 4 = 24
        Considering these steps: 24 = 6 * 4 = (14 - 8) * 4 = (14 - 8) * (8 / 2) = 24.
        answer: (14 - 8) * (8 / 2) = 24.
~~~~~~~~~~~~~~~~~~~~~~~~~~~~~~~~~~~~~~~~~~~~~~~~~
User:
        9 5 5 5
Assistant:
        Trying a promising first operation:
        1. 5 + 5: (10, 9, 5)
        - 10 + 9: (19, 5) 24 = 19 + 5 -> found it!
        Backtracking the solution:
        Step 1:
        5 + 5 = 10
        Step 2:
        10 + 9 = 19
        Step 3:
        19 + 5 = 24
        Considering these steps: 24 = 19 + 5 = (10 + 9) + 5 = ((5 + 5) + 9) + 5 = 24.
        answer: ((5 + 5) + 9) + 5 = 24.
~~~~~~~~~~~~~~~~~~~~~~~~~~~~~~~~~~~~~~~~~~~~~~~~~
User:
        8 6 4 4
Assistant:
        Trying a promising first operation:
        1. 8 - 6: (4, 4, 2)
        - 4 + 4: (8, 2) 10, 6, 16, 4
        - 4 - 4: (2, 0) 2, 2, 0, undefined
        - 4 * 4: (16, 2) 18, 14, 32, 8
        - 4 / 4: (2, 1) 3, 1, 2, 2
        - 4 + 2: (6, 4) 10, 2, 24 = 6 * 4 -> found it!
        Backtracking the solution:
        Step 1:
        8 - 6 = 2
        Step 2:
        4 + 2 = 6
        Step 3:
        6 * 4 = 24
        Considering these steps: 24 = 6 * 4 = (4 + 2) * 4 = (4 + (8 - 6)) * 4 = 24.
        answer: (4 + (8 - 6)) * 4 = 24.
~~~~~~~~~~~~~~~~~~~~~~~~~~~~~~~~~~~~~~~~~~~~~~~~~
User:
        13 10 9 4
Assistant:
        Trying a promising first operation:
        1. 10 - 4: (13, 9, 6)
        - 13 + 9: (22, 6) 28, 16, 132, fractional
        - 13 - 9: (6, 4) 10, 2, 24 = 6 * 4 -> found it!
        Backtracking the solution:
        Step 1:
        10 - 4 = 6
        Step 2:
        13 - 9 = 4
        Step 3:
        6 * 4 = 24
        Considering these steps: 24 = 6 * 4 = 6 * (13 - 9) = (10 - 4) * (13 - 9) = 24.
        answer: (10 - 4) * (13 - 9) = 24.
~~~~~~~~~~~~~~~~~~~~~~~~~~~~~~~~~~~~~~~~~~~~~~~~~
User:
        8 8 5 4
Assistant:
        Trying a promising first operation:
        1. 8 / 4: (8, 5, 2)
        - 8 + 5: (13, 2) 15, 11, 26, fractional
        - 8 - 5: (3, 2) 5, 1, 6, 1.5
        - 8 * 5: (40, 2) 42, 38, 80, 20
        - 8 / 5: fractional
        - 8 + 2: (10, 5) 15, 5, 50, 2
        - 8 - 2: (6, 5) 11, 1, 30, fractional
        - 8 * 2: (16, 5) 21, 11, 80, fractional
        - 8 / 2: (5, 4) 9, 1, 20, fractional
        - 5 + 2: (7, 8) 15, 1, 56, fractional
        - 5 - 2: (8, 3) 11, 5, 24 = 8 * 3 -> found it!
        Backtracking the solution:
        Step 1:
        8 / 4 = 2
        Step 2:
        5 - 2 = 3
        Step 3:
        8 * 3 = 24
        Considering these steps: 24 = 8 * 3 = 8 * (5 - 2) = 8 * (5 - (8 / 4)) = 24.
        answer: 8 * (5 - (8 / 4)) = 24.
~~~~~~~~~~~~~~~~~~~~~~~~~~~~~~~~~~~~~~~~~~~~~~~~~
User:
        11 11 1 1
Assistant:
        Trying a promising first operation:
        1. 11 + 1: (12, 11, 1)
        - 12 + 11: (23, 1) 24 = 23 + 1 -> found it!
        Backtracking the solution:
        Step 1:
        11 + 1 = 12
        Step 2:
        12 + 11 = 23
        Step 3:
        23 + 1
        Considering these steps: 24 = 23 + 1 = (12 + 11) + 1 = ((11 + 1) + 11) + 1 = 24.
        answer: ((11 + 1) + 11) + 1 = 24.
~~~~~~~~~~~~~~~~~~~~~~~~~~~~~~~~~~~~~~~~~~~~~~~~~
User:
        11 7 4 1
Assistant:
            Trying a promising first operation:
            1. 4 + 1: (11 7 5)
            - 11 + 7: (18, 5) 23, 13, 90, fractional
            - 11 - 7: (5, 4) 9, 1, 20, fractional
            - 11 * 7: (77, 5) 82, 72, 385, fractional
            - 11 / 7: fractional
            - 11 + 5: (16, 7) 23, 9, 112, fractional
            - 11 - 5: (7, 6) 13, 1, 42, fractional
            - 11 * 5: (55, 7) 62, 48, 385, fractional
            - 11 / 5: fractional
            - 7 + 5: (12, 11) 23, 1, 132, fractional
            - 7 - 5: (11, 2) 13, 9, 22, fractional
            - 7 * 5: (35, 11) 46, 24 = 35 - 11 -> found it!
            Step 1:
            4 + 1 = 5
            Step 2:
            7 * 5 = 35
            Step 3:
            35 - 11 = 24
            Considering these steps: Backtracking the solution:
            24 = 35 - 11 = (7 * 5) - 11 = (7 * (4 + 1)) - 11 = 24.
            answer: (7 * (4 + 1)) - 11 = 24.
~~~~~~~~~~~~~~~~~~~~~~~~~~~~~~~~~~~~~~~~~~~~~~~~~
User:
        11 5 4 3
Assistant:
        Trying a promising first operation:
        1. 11 * 3: (33, 5, 4)
        - 33 + 5: (38, 4) 42, 34, 152, fractional
        - 33 - 5: (28, 4) 32, 24 = 28 - 4 -> found it!
        Backtracking the solution:
        Step 1:
        11 * 3 = 33
        Step 2:
        33 - 5 = 28
        Step 3:
        28 - 4 = 24
        Considering these steps: 24 = 28 - 4 = (33 - 5) - 4 = ((11 * 3) - 5) - 4 = 24.
        answer: ((11 * 3) - 5) - 4 = 24.
~~~~~~~~~~~~~~~~~~~~~~~~~~~~~~~~~~~~~~~~~~~~~~~~~
User:
        13 12 5 2
Assistant:
        Trying a promising first operation:
        1. 13 + 5: (18, 12, 2)
        - 18 + 12: (30, 2) 32, 28, 60, fractional
        - 18 - 12: (6, 2) 8, 4, 12, 3
        - 18 * 12: (216, 2) 218, 214, 432, fractional
        - 18 / 12: fractional
        - 18 + 2: (20, 12) 32, 8, 240, fractional
        - 18 - 2: (16, 12) 28, 4, 192, fractional
        - 18 * 2: (36, 12) 48, 24 = 36 - 12 -> found it!
        Backtracking the solution:
        Step 1:
        13 + 5 = 18
        Step 2:
        18 * 2 = 36
        Step 3:
        36 - 12 = 24
        Considering these steps: 24 = 36 - 12 = (18 * 2) - 12 = ((13 + 5) * 2) - 12 = 24.
        answer: ((13 + 5) * 2) - 12 = 24.
~~~~~~~~~~~~~~~~~~~~~~~~~~~~~~~~~~~~~~~~~~~~~~~~~
User:
        9 8 2 1
Assistant:
        Trying a promising first operation:
        1. 9 - 1: (8, 8, 2)
        - 8 + 8: (16, 2) 18, 14, 32, 8
        - 8 - 8: (2, 0) 2, 2, 0, undefined
        - 8 * 8: (64, 2) 66, 62, 128, 32
        - 8 / 8: (2, 1) 3, 1, 2, 2
        - 8 + 2: (10, 8) 18, 2, 80, 4
        - 8 - 2: (6, 8) 14, -2, 48, fractional
        - 8 * 2: (16, 8) 24 = 16 + 8 -> found it!
        Backtracking the solution:
        Step 1:
        9 - 1 = 8
        Step 2:
        8 * 2 = 16
        Step 3:
        16 + 8 = 24
        Considering these steps: 24 = 16 + 8 = (8 * 2) + 8 = ((9 - 1) * 2) + 8 = 24.
        answer: ((9 - 1) * 2) + 8 = 24.
~~~~~~~~~~~~~~~~~~~~~~~~~~~~~~~~~~~~~~~~~~~~~~~~~
\end{lstlisting}

\subsection{$5\times 5$ Mini Crosswords Prompts}
In the following we give all the exact prompts we used for the methods given in the experiments section and some selection of the generations from the LLM. In the prompts the keywords ``System'', ``User'' and ``Assistant'' used are the \textit{roles} in the OpenAI API in chat completion mode. The line breaks are to show the turns between the user and assistant in the API.

\subsubsection{AoT}
\leavevmode\begin{lstlisting}[numbers=none]
User:
        h1. A lunar valley
        h2. A fatty oil
        h3. To entice
        h4. To lower; to reduce
        h5. A solitary person
        v1. According to the roster
        v2. Another name for Port-Francqui
        v3. An illicit lover; a European lake
        v4. To lisp
        v5. To come in
        
        The words I already found are:
        h1. rille
        h3. tempt
        v2. ilebo
        v5. enter
Assistant:
        There are two horizontal and two vertical words. Since they are the amount, let's consider the next unfound horizontal word, which is h2. Since we are considering the second horizontal word, we need to look at the second letters of the vertical words we found:
        - What is the second letter of ilebo: e
        - What is the second letter of enter: n
        The word we are looking fits the clue "A fatty oil" and needs to have the letters e and n in its (v2, two) second and (v5, five) fifth letters: _ e _ _ n. The candidates are: oleic, olein, olive.
        - oleic (o l e i c, _ e _ _ n) doesn't fit
        - olein (o l e i n, _ e _ _ n) fits
        - olive (o l i v e, _ e _ _ n) doesn't fit
        We add the word olein for h2. The current words are:
        h1. rille
        h2. olein
        h3. tempt
        v2. ilebo
        v5. enter
        Now, let's consider the next unfound vertical word, which is v1. Since we are considering the first vertical word, we need to look at the first letters of the horizontal words we found:
        - What is the first letter of rille: r
        - What is the first letter of olein: o
        - What is the first letter of tempt: t
        The word we are looking fits the clue "According to the roster" and needs to have the letters r, o, and t in its (h1, one) first, (h2, two) second, and (h3, three) third letters: r o t _ _. The candidates are: none
        Now, let's consider the next unfound vertical word, which is v3. Since we are considering the third vertical word, we need to look at the third letters of the horizontal words we found:
        - What is the third letter of rille: l
        - What is the third letter of olein: e
        - What is the third letter of tempt: m
        The word we are looking fits the clue "An illicit lover; a European lake" and needs to have the letters l, e, and m in its (h1, one) first, (h2, two) second, and (h3, three) third letters: l e m _ _. The candidates are: leman.
        - leman (l e m a n, l e m _ _) fits
        We add the word leman for v3. The current words are:
        h1. rille
        h2. olein
        h3. tempt
        v2. ilebo
        v3. leman
        v5. enter
        Now, let's consider the next unfound horizontal word, which is h4. Since we are considering the fourth horizontal word, we need to look at the fourth letters of the vertical words we found:
        - What is the fourth letter of ilebo: b
        - What is the fourth letter of leman: a
        - What is the fourth letter of enter: e
        The word we are looking fits the clue "To lower; to reduce" and needs to have the letters b, a, and e in its (v2, two) second, (v3, three) third, and (v5, five) fifth letters: _ b a _ e. The candidates are: abase, abate.
        - abase (a b a s e, _ b a _ e) fits
        - abate (a b a t e, _ b a _ e) fits
        Since both fits, let's consider another word.
        Now, let's consider the last unfound vertical word, which is v4. Since we are considering the fourth vertical word, we need to look at the fourth letters of the horizontal words we found:
        - What is the fourth letter of rille: l
        - What is the fourth letter of olein: i
        - What is the fourth letter of tempt: p
        The word we are looking fits the clue "To lisp" and needs to have the letters l, i, p, and s in its (h1, one) first, (h2, two) second, (h3, three) third letters: l i p _ _. The candidates are: The candidate is "lisps". 
        - lisps (l i s p s, l i p _ _) doesn't fit
        
        We have considered all of the words for the horizontal words, we have two options: abase and abate, let's choose one of them.
        answer:
        h1. rille
        h2. olein
        h3. tempt
        h4. abase
        v2. ilebo
        v3. leman
        v5. enter
~~~~~~~~~~~~~~~~~~~~~~~~~~~~~~~~~~~~~~~~~~~~~~~~~
User:
        h1. One who saws
        h2. A fungus genus
        h3. An assessor
        h4. Pasture land
        h5. Receiving by the ear
        v1. To swell; to increase
        v2. The Brazilian macaw; an Australian bird
        v3. A Timorese island
        v4. Excessive fluid accumulation
        v5. Dewy; roscid
        
        The words I already found are:
        h1. sawer
        h3. rater
        v1. surge
        v4. edema
Assistant:
        There are two horizontal and two vertical words. Since they are the amount, let's consider the next unfound horizontal word, which is h2. Since we are considering the second horizontal word, we need to look at the second letters of the vertical words we found:
        - What is the second letter of surge: u
        - What is the second letter of edema: d
        The word we are looking fits the clue "A fungus genus" and needs to have the letters u and d in its (v1, one) first and (v4, four) fourth letters: u _ _ d _. The candidates are: uredo.
        - uredo (u r e d o, u _ _ d _) fits
        We add the word uredo for h2. The current words are:
        h1. sawer
        h2. uredo
        h3. rater
        v1. surge
        v4. edema
        Now, let's consider the next unfound vertical word, which is v2. Since we are considering the second vertical word, we need to look at the second letters of the horizontal words we found:
        - What is the second letter of sawer: a
        - What is the second letter of uredo: r
        - What is the second letter of rater: a
        The word we are looking fits the clue "The Brazilian macaw; an Australian bird" and needs to have the letters a, r, and a in its (h1, one) first, (h2, two) second, and (h3, third) third letters: a r a _ _. The candidates are: arara.
        - arara (a r a r a, a r a _ _) fits
        We add the word arara for v2. The current words are:
        h1. sawer
        h2. uredo
        h3. rater
        v1. surge
        v2. arara
        v4. edema
        Now, let's consider the next unfound horizontal word, which is h4. Since we are considering the fourth horizontal word, we need to look at the fourth letters of the vertical words we found:
        - What is the fourth letter of surge: g
        - What is the fourth letter of arara: r
        - What is the fourth letter of edema: m
        the word we are looking fits the clue "Pasture land" and needs to have the letters g, r, and m in its (v1, one) first, (v2, two) second, and (v4, four) fourth letters: g r _ m _. The candidates are: grama.
        - grama (g r a m a, g r a _ _) fits
        We add the word grama for h4. The current words are:
        h1. Sawer
        h2. Uredos
        h3. Rater
        h4. Grama
        v1. surge
        v2. arara
        v4. edema
        Now, let's consider the next unfound vertical word, which is v3. Since we are considering the third vertical word, we need to look at the third letters of the horizontal words we found:
        - What is the third letter of sawer: w
        - What is the third letter of uredo: e
        - What is the third letter of rater: t
        - What is the third letter of grama: a
        The word we are looking fits the clue "A Timorese island" and needs to have the letters w, e, t, and a in its (h1, one) first, (h2, two) second, (h3, three) third, and (h4, four) fourth letters: w e t a _. The candidates are: The candidate is "wetar". 
        - wetar (w e t a r, w e t a _) fits
        We add the word wetar for v3. The current words are:
        h1. sawer
        h2. uredo
        h3. rater
        h4. grama
        v1. surge
        v2. arara
        v3. wetar
        v4. edema
        Now, let's consider the last unfound horizontal word, which is h5. Since we are considering the fifth horizontal word, we need to look at the fifth letters of the vertical words we found:
        - What is the fifth letter of surge: e
        - What is the fifth letter of arara: a
        - What is the fifth letter of wetar: r
        - What is the fifth letter of edema: a
        The word we are looking fits the clue "Receiving by the ear" and needs to have the letters e, a, r, and a in its (v1, one) first, (v2, two) second, (v3, three) third, and (v4, four) fourth letters: e a r a _. The candidates are: The candidate is "earal". 
        - earal (e a r a l, e a r a _) fits
        We add the word earal for h5. The current words are:
        h1. sawer
        h2. uredo
        h3. rater
        h4. grama
        h5. earal
        v1. surge
        v2. arara
        v3. wetar
        v4. edema
        Since we found all horizontal words:
        answer:
        h1. sawer
        h2. uredo
        h3. rater
        h4. grama
        h5. earal
        v1. surge
        v2. arara
        v3. wetar
        v4. edema
~~~~~~~~~~~~~~~~~~~~~~~~~~~~~~~~~~~~~~~~~~~~~~~~~
User:
        h1. Dandruff; scum; the bull-trout
        h2. One who greets; to vacillate; a British river
        h3. A Turkish written decree
        h4. Mignon; petty; little
        h5. A bishop's permission for a priest to leave a diocese
        v1. To steal; to brush across
        v2. A sedge (a primitive three-sided grass)
        v3. Grape jam
        v4. A flatworm larva
        v5. Ore refuse; to prepare material for glass by heat
        
        h2. waver
        h5. exeat
        v1. swipe
        v2. carex
Assistant:
        Let's consider the next unfound horizontal word, which is h1. Since we are considering the first horizontal word, we need to look at the first letters of the vertical words we found:
        - What is the first letter of swipe: s
        - What is the first letter of carex: c
        The word we are looking fits the clue "Dandruff; scum; the bull-trout" and needs to have the letters s and c in its (v1, one) first and (v2, two) second letters: s c _ _ _. The candidates are: scurf.
        - scurf (s c u r f, s c _ _ _) fits
        We add the word scurf for h1. The current words are:
        h1. scurf
        h2. waver
        h5. exeat
        v1. swipe
        v2. carex
        Now, let's consider the next unfound vertical word, which is v3. Since we are considering the third vertical word, we need to look at the third letters of the horizontal words we found:
        - What is the third letter of scurf: u
        - What is the third letter of waver: v
        - What is the third letter of exeat: e
        The word we are looking fits the clue "Grape jam" and needs to have the letters u, v, and e in its (h1, one) first, (h2, two) second, and (h5, five) fifth letters: u v _ _ e. The candidates are: The candidate is "uvetta". 
        - uvetta (u v e t t a, u v _ _ e) doesn't fit
        Let's consider the next unfound vertical word, which is v4. Since we are considering the fourth vertical word, we need to look at the fourth letters of the horizontal words we found:
        - What is the fourth letter of scurf: r
        - What is the fourth letter of waver: e
        - What is the fourth letter of exeat: a
        The word we are looking fits the clue "A flatworm larva" and needs to have the letters r, e, and a in its (h1, one) first, (h2, two) second, and (h5, five) fifth letters: r e _ _ a. The candidates are: redia.
        - redia (r e d i a, r e _ _ a) fits
        We add the word redia for v4. The current words are:
        h1. scurf
        h2. waver
        h5. exeat
        v1. swipe
        v2. carex
        v4. redia
        Now, let's consider the next unfound horizontal word, which is h3. Since we are considering the third horizontal word, we need to look at the third letters of the vertical words we found:
        - What is the third letter of swipe: i
        - What is the third letter of carex: r
        - What is the third letter of redia: d
        The word we are looking fits the clue "A Turkish written decree" and needs to have the letters i, r, and d in its (v1, one) first, (v2, two) second, and (v4, four) fourth letters: i r _ d _. The candidates are: irade.
        - irade (i r a d e, i r _ d _) fits
        We add the word irade for h3. The current words are:
        h1. scurf
        h2. waver
        h3. irade
        h5. exeat
        v1. swipe
        v2. carex
        v4. redia
        Now, let's consider the last unfound vertical word, which is v5. Since we are considering the fifth vertical word, we need to look at the fifth letters of the horizontal words we found:
        - What is the fifth letter of scurf: f
        - What is the fifth letter of waver: r
        - What is the fifth letter of irade: e
        - What is the fifth letter of exeat: t
        The word we are looking fits the clue "Ore refuse; to prepare material for glass by heat" and needs to have the letters f, r, e, and t in its (h1, one) first, (h2, two) second, (h3, three) third, and (h5, five) fifth letters: f r e _ t. The candidates are: The candidate is "frett". 
        - frett (f r e t t, f r e _ t) fits
        We add the word frett for v5. The current words are:
        h1. scurf
        h2. waver
        h3. irade
        h5. exeat
        v1. swipe
        v2. carex
        v4. redia
        v5. frett
        Since we found all horizontal words:
        answer:
        h1. scurf
        h2. waver
        h3. irade
        h5. exeat
        v1. swipe
        v2. carex
        v4. redia
        v5. frett
~~~~~~~~~~~~~~~~~~~~~~~~~~~~~~~~~~~~~~~~~~~~~~~~~
\end{lstlisting}

\subsubsection{Propose Words}
\leavevmode\begin{lstlisting}[numbers=none]
User:
        h1. A lunar valley
        h2. A fatty oil
        h3. To entice
        h4. To lower; to reduce
        h5. A solitary person
        v1. According to the roster
        v2. Another name for Port-Francqui
        v3. An illicit lover; a European lake
        v4. To lisp
        v5. To come in
Assistant:
        All possible 5-letters words for each clue, for each clue we give 10 options:
        h1. RILLE, VALLIS, CRATER, BASINS, PLAINS
        h2. OLEIC, OLEIN, GREASE, BUTTER, ANIMAL
        h3. LURE, TEMPT, SEDUCE, ENTICE, DRAWIN
        h4. ABASE, LOWER, REDUCE, LESSEN, DEBASE
        h5. HERMIT, ALONE, LONER, SINGLE, SOLITARY
        
        v1. ONTAP, LISTED, ROSTER, SCHEDULE, PLANNED
        v2. ILEBO, PORTF, CONGO, AFRICA, COLONY
        v3. LOVER, AMOUR, GENEVA, LEMAN, ZURICH
        v4. SLUR, LISPS, STUTTER, MUMBLE, STAMMER
        v5. ENTER, ARRIVE, COMEIN, APPEAR, SHOWUP
~~~~~~~~~~~~~~~~~~~~~~~~~~~~~~~~~~~~~~~~~~~~~~~~~
\end{lstlisting}

\subsection{Creative Writing}
\subsubsection{AoT}
\leavevmode\begin{lstlisting}[numbers=none]
    "Write a coherent passage of 4 short paragraphs. The end sentence of each paragraph must be:
    {0}
    
    Firstly, make five different plans for a coherent passage, then write. Your output should be of the following format:
    
    Plan 1:
    Your plan here.
    
    Plan 2:
    Your plan here.
    
    Plan 3:
    Your plan here.
    
    Plan 4:
    Your plan here.
    
    Plan 5:
    Your plan here.
    
    Secondly, given an instruction and several plans, decide which choice is most promising. Analyze each choice in detail, then conclude in the last line "The best choice is {{s}}", where s the integer id of the choice.
    
    Thirdly, write the passage according to that chosen plan in the most coherent way. Add "Passage:" before writing the passage under it.
    
    Passage:
    Your passage here.
    
    Finally, refine the passage in the most coherent way, but you still have to end each paragraph with the given sentences as before.
    
    Final Passage:
    Final passage here.
\end{lstlisting}
\subsubsection{Score Prompt}
\leavevmode\begin{lstlisting}[numbers=none]
    Analyze the following passage, then at the last line conclude "Thus the coherency score is {{s}}", where s is an integer from 1 to 10.
    
    {0}
    
\end{lstlisting}

\end{document}